\documentclass[11pt]{article}
\usepackage[abbrvbib, preprint]{jmlr2e}

\usepackage{enumitem}
\usepackage[dvipsnames]{xcolor}

\usepackage{amsfonts,amsmath,amssymb,dsfont,bbm}
\usepackage{algorithm}
\usepackage[noend]{algpseudocode}

\usepackage[mathscr]{euscript}
\usepackage{caption}
\usepackage{subcaption}
\usepackage{cancel}

\usepackage{tikz}
\usetikzlibrary{arrows.meta}
\usetikzlibrary{shapes.geometric}

\usepackage{enumitem}
\usepackage{multirow}
\usepackage{makecell}

\let\jmlrproof\proof \let\endjmlrproof\endproof
\let\proof\relax \let\endproof\relax

\usepackage{amsthm}
\newif\ifusinglinuxtexlive
\usinglinuxtexlivetrue
\ifusinglinuxtexlive
    \usepackage{thmtools}
\fi

\let\proof\jmlrproof \let\endproof\endjmlrproof

\usepackage[bibliography=common,appendix=append]{apxproof} 
\usepackage[nameinlink,capitalize]{cleveref}

\crefname{equation}{}{} 
\numberwithin{equation}{section}

\newtheorem{defin}{Definition}
\newtheoremrep{lem}[defin]{Lemma}
\newtheoremrep{thm}[defin]{Theorem}
\newtheoremrep{coro}[defin]{Corollary}
\newtheorem{prop}[defin]{Proposition}

\crefname{defin}{Definition}{Definitions}
\crefname{lem}{Lemma}{Lemmas}
\crefname{thm}{Theorem}{Theorems}
\crefname{coro}{Corollary}{Corollaries}
\crefname{prop}{Proposition}{Propositions}

\newcommand{\doo}{\mathrm{do}}
\newcommand{\cond}{\,\vert\,}

\newcommand{\F}{{\mathbb{F}}}

\newcommand{\cF}{\mathcal{F}}

\newcommand{\cC}{\mathcal{C}}

\newcommand{\cD}{\mathcal{D}}

\newcommand{\cM}{\mathcal{M}}

\newcommand{\cG}{\mathcal{G}}

\newcommand{\cH}{\mathcal{H}}

\newcommand{\Rcv}{\mathrm{Re}}
\newcommand{\Emt}{\mathrm{Em}}
\newcommand{\Prnt}{\mathrm{Pa}}
\newcommand{\Chl}{\mathrm{Ch}}
\newcommand{\Anc}{\mathrm{An}}
\newcommand{\Dsc}{\mathrm{De}}

\newcommand{\Do}{\mathrm{do}}

\newcommand{\Pa}[2]{\mathrm{Pa}_{#1}^{*}(#2)}
\newcommand{\Ch}[2]{\mathrm{Ch}_{#1}^{*}(#2)}

\newcommand{\indep}{\perp\kern-7pt\perp}

\newcommand{\dashleftrightarrow}{\mathrel{
    \tikz[baseline]{\draw[dash pattern=on .25em off .2 em,<->](0,.58ex)--(1.75em,.58ex)}}}

\definecolor{Cblue}{rgb}{0.85, 0.95, 1.0}
\definecolor{Cyellow}{rgb}{1.0, 0.95, 0.65}
\definecolor{Cgreen}{rgb}{0.8, 1.0, 0.7}
\definecolor{Cmisc}{rgb}{0.9, 0.9, 0.9}
\definecolor{Cinvalid}{rgb}{1.0, 0.7, 0.65}
\definecolor{darkyellow}{rgb}{0.7, 0.65, 0.0}

\usepackage{booktabs}

\allowdisplaybreaks
\numberwithin{equation}{section}

\usepackage{lastpage}
\jmlrheading{}{2026}{1-\pageref{LastPage}}{10/26}{}{}{Jani Nykänen, Otto Tabell, Santtu Tikka and Juha Karvanen}

\ShortHeadings{Invariance of Clustering Operations in Causal Effect Identification}{Nykänen, Tabell, Tikka and Karvanen}

\begin{document}

\title{Invariance of Clustering Operations in Causal Effect Identification}

\author{\name Jani Nykänen \email jani.m.nykanen@jyu.fi \\
        \name Otto Tabell \email otto.h.e.tabell@jyu.fi \\
        \name Santtu Tikka \email santtu.tikka@jyu.fi \\
        \name Juha Karvanen \email juha.t.karvanen@jyu.fi \\
        \addr Department of Mathematics and Statistics \\
        FI-40014 University of Jyvaskyla, Finland}

\editor{}

\maketitle

\begin{abstract}
{\noindent
Clustering variables in causal graphs reduces the size of the graph and simplifies causal inference. However, arbitrary clustering can alter crucial causal relations among variables and lead to erroneous conclusions. While the identifiability of a causal effect in the clustered graph implies the identifiability in the original graph under mild conditions, nonidentifiability in clustered graph does not imply nonidentifiability in the original graph without further assumptions. When both identifiability and nonidentifiability are preserved, the clustering operation is called identification invariant. We present a broad class of clustering operations that are identification invariant based on conditions related to the c-components of the original graph. Finally, we demonstrate use of the results in practical settings.
} 

\smallskip

\begin{keywords}
causal inference, graph theory, identifiability, cluster
\end{keywords}

\end{abstract}


\section{Introduction} \label{sec_intro}

A causal model is a simplified representation of real causal mechanisms. It is unnecessary and often practically unfeasible to describe lower level causal mechanisms when the interest lies in a higher level. Clustering variables, such as genetic factors, job history or health status, and representing them with a single vertex in a directed acyclic graph (DAG) enables modeling high-level causal relationships without specifying the detailed structure within the cluster. 
Applying Occam's razor, a causal model should be ``as coarse as possible but not coarser''. 

In addition to high-level causal modeling, clustering of variables in causal graphs is beneficial in other ways. Clustering reduces the size of a graph, making it easier to comprehend the visual representation. Clustering may also speed up identification algorithms and simplify the identifying functionals \citep{thk2023}. Assumptions on clusters may provide useful information for causal discovery algorithms \citep[and references therein]{madaleno2026coarsening}. 

In the literature, clustering in graphical models has been approached from several perspectives and under diverse terminology. Clusters may be referred to as groups \citep{wahl2024discovery}, super-nodes \citep{tennant2021use}, partitions \citep{madaleno2026coarsening}, macro variables \citep{ferreira2025identifying} or vector variables \citep{ninad2025causal}. Clustering may be called coarsening \citep{madaleno2026coarsening}, summarization \citep{zeng2025causal} or (an instance of) causal abstraction \citep{rubenstein2017causal, beckers2019abstracting}. Graphs that contain clustered variables may be called cluster graphs or C-DAGs \citep{anand2023cdags}. Clustering is relevant for all areas of causal inference and has recently gained attention especially in causal discovery. \citet[Appendix A]{madaleno2026coarsening} summarizes the current state of the art in causal discovery. Existing approaches differ in whether the clusters are assumed to be known beforehand \citep[e.g.][]{wahl2024discovery,gobler2025nonlinear} or if they are learned jointly with the causal structure \citep[e.g.][]{massidda2024learning,madaleno2026coarsening}.

This paper focuses on the impact of clustering on the identifiability of causal effects. This topic has previously been studied in settings where (a) both the original and the clustered graph are acyclic \citep{thk2023,ttk2025,anand2023cdags}; (b) the original graph is acyclic but the clustered graph may contain cycles \citep{ferreira2025identifying,yvernes2026relaxing}; and (c) both graphs may be cyclic \citep{ferreira2026identifying}. In all settings, under mild assumptions, it has been shown that if a causal effect is identifiable in the clustered graph, then the corresponding effect is identifiable in the original graph. \citet{anand2023cdags,ferreira2025identifying,ferreira2026identifying,yvernes2026relaxing} consider \emph{macro} causal effects, that is, effects of entire clusters on other clusters, and establish the converse identifiability result only relative to the class of compatible graphs: nonidentifiability in the clustered graph implies nonidentifiability in \emph{some} graph that yields the clustered graph, but not necessarily in the graph at hand. 

\citet{thk2023} and \citet{ttk2025} use a special class of clusters called \emph{transit clusters} and consider effects not involving the clustered variables, and determine when nonidentifiability in the clustered graph implies nonidentifiability in the original graph, i.e., when \emph{identification invariance} holds. Such an approach has a clear advantage: we may conclusively evaluate the identifiability of given causal effects merely with the clustered graph. The downside is that the converse result requires stricter conditions related to the cluster: the mild conditions described above---more precisely, acyclicity of the clustered graph---are insufficient for establishing identification invariance. This is illustrated by \Cref{fig_violated_paths_no_identif,fig_plain_not_coherent}, where a causal effect is identifiable in the original graph but not in the clustered graph. To rule out such settings, transit clusters impose rather strict conditions on how the cluster connects to its parents and children.

In this paper, our aim is to continue in the line of research of identification invariance. Our starting point is the observation that identifiability always transfers from the clustered graph to the original graph, provided that the clustering operation does not create cycles. Establishing identification invariance therefore reduces to a single question: when does nonidentifiability in the clustered graph imply nonidentifiability in the original graph? As nonidentifiability from the joint observed distribution is equivalent to the presence of a graphical structure called a hedge \citep{shpitser_pearl_1,shpitser2023doesidalgorithmfail}, this amounts to finding conditions under which clustering cannot create a hedge that is not already present in the original graph. We present a class of clusters that we call \emph{coherent}, which generalizes the class of transit clusters, and we then present a class of identification invariant clustering operations for coherent clusters based on conditions related to c-components \citep{tian2002general, shpitser_pearl_1}. These results, in turn, generalize some of the earlier identifiability results by \citet{thk2023}, thus widening the applicability of clustering approaches related to identification invariance.

\smallskip

The article is organized as follows. In \cref{sec_notation}, we introduce some essential preliminary definitions and notation. We proceed to define clustering in \cref{sec_clustering} and consider properties of clustering operations. In \cref{sec_identif} we study the effect of clustering on identifiability of causal effects, focusing on determining universally identification invariant clustering operations. We demonstrate the use of identifiability results in \cref{sec_app}. Concluding thoughts are offered in \cref{sec_discussion}.


\section{Notation and Definitions} \label{sec_notation}

A directed acyclic graph (DAG) is a pair $\cG = (V, E)$, where $V$ is the set of vertices and $E$ is the set of directed edges, denoted by arrows $v_i \rightarrow v_j$, such that there are no directed cycles in $\cG$. Supposing sets $A, B \subseteq V$ in a DAG $\cG = (V, E)$, we use the following notation and conventions:
\begin{itemize}

	\item $\Prnt_{\cG}^*(A)$, $\Chl_{\cG}^*(A)$, $\Anc_{\cG}^*(A)$ and $\Dsc_{\cG}^*(A)$ denote the parents, children, ancestors and descendants of the vertices of $A$ in $\cG$, respectively. The subscript can be omitted if the underlying graph $\cG$ is clear from the context. If the set $A$ is included in the respective sets, then the asterisk ${}^*$ is omitted (e.g., $\Prnt_{\cG}(A) = \Pa{\cG}{A} \cup A$). If $A = \left\{ a \right\}$, then the curly brackets are omitted (e.g., $\Ch{\cG}{a} = \Ch{\cG}{\left\{ a \right\}}$).

    \item $\cG[A]$ denotes the induced subgraph $(A, F)$ of $\cG$, where $F$ contains those edges of $E$ that have both endpoints in $A$. $\cG[\overline{A}, \underline{B}]$ denotes an induced edge subgraph obtained from $\cG$ by removing incoming edges to $A$ and outgoing edges of $B$ (if no edges incoming to $A$ or outgoing of $B$ are removed, then the corresponding symbol is omitted, i.e., $\cG[\overline{A}]$ or $\cG[\underline{B}]$).

\end{itemize}

\smallskip

We study DAGs induced by structural causal models.
\begin{defin}[Structural causal model]
A structural causal model (SCM) is a tuple $\cM = (X_V, X_{\widetilde{U}}, \F, p)$ consisting of the following objects:
\begin{itemize}
	\item A vector of observed random variables $X_V$, indexed by the set $V$.
	\item A vector of unobserved random variables $X_{\widetilde{U}}$, indexed by the set $\widetilde{U}$.
	\item A family of functions $\F = (f_v)_{v \in V}$ such that each $f_v \in \F$ is a map from the state space of $X_{\widetilde{U}(v) \cup X_{\Prnt(v)}}$ to the state space of $X_v$, where $\widetilde{U}(v) \subseteq \widetilde{U}$ and $\Prnt(v) \subseteq V \setminus \{ v \}$, such that $\F$ forms a map from the state space of $X_{\widetilde{U}}$ to the state space of $X_V$. Accordingly, each random variable $v \in V$ is defined by the corresponding structural equation $X_v = f_v(X_{\Prnt(v)}, X_{\widetilde{U}(v)})$.
	\item A joint probability distribution $p$ of the unobserved variables $X_{\widetilde U}$.
\end{itemize}
\end{defin}

We only consider recursive SCMs, i.e., models where the unique solution of $\F$ in terms of $\widetilde{U}$ is obtained by recursively substituting each $X_w\in (X_w)_{w\in\Prnt(v)}$ in each function $f_v\in \F$ with their corresponding functional representation with respect to $f_w \in \F$. Unobserved vertices that are present in exactly one structural equation are called \emph{dedicated error terms}, and the set of dedicated error terms is denoted by $X_{\bar{U}}$. Correspondingly, we use $X_U$ to denote the set of unobserved variables that are present precisely in two structural equations. We focus solely on semi-Markovian causal models, where each unobserved variable can be present in at most two structural equations, that is, $X_{\widetilde{U}} = X_{U \cup \bar{U}}$. A motivation for this is described below. 

An SCM $\cM$ induces a causal graph, that is, a DAG $\cG = \cG(\cM) = (V \cup U, E)$, where the vertices correspond with the indexing sets of the random variables in $\cM$, and the directed edges in $E$ point from the members of $\Prnt(v)$ and $U(v)$ to $v$. The dedicated error terms are omitted from the causal graphs, i.e., all members of $U$ have exactly two children in $\cG$.

Vertices of $V$ and $U$ are called observed and unobserved vertices (of $\cG$), respectively.  Moreover, unobserved vertices with two children are presented in figures using bidirected (dashed) arcs without specifying the unobserved vertex itself. A bidirected path is a path that consists solely of bidirected arcs, that is, a path of the type $v_1 \leftarrow u_1 \rightarrow v_2 \leftarrow \cdots \leftarrow u_k \rightarrow v_k$, where $v_1, \ldots, v_k \in V$ and $u_1, \ldots, u_k \in U$. If one does not restrict oneself to semi-Markovian causal models, then the induced DAG can have unobserved vertices with more than two children, in which case one can apply latent projection \citep{verma93} to project the induced DAG into a DAG where each unobserved vertex has exactly two children. An alternative way to formulate such models is to use acyclic directed mixed graphs \citep[ADMGs,][]{richardson2003admg, richardson2023admg}, which explicitly allow bidirected arrows in graphs.

An intervention $\Do(x_B)$, where $B \subset V$, on a model $\cM$ forces the variables $X_B$ to take  specific values $x_B$. The intervention creates a submodel $\cM_{B}$, which is defined as follows:
\begin{defin}[Submodel]
A submodel $\cM_{B}$ of a structural causal model $\cM$ obtained from an intervention $\Do(x_B)$ is an SCM $(X_V, X_{\widetilde{U}}, \F', p)$, where the new set of functions $\F'$ is obtained from $\F$ by replacing the functions $(f_b)_{b \in B} \subseteq \F$ by constant functions that output the corresponding value in $x_B$. 
\end{defin}

The marginal distribution of $X_{V\setminus B}$ in $\cM_{B}$ is called the interventional distribution, and it is denoted by $p(\cdot \mid \Do(x_B))$. The causal effect of $X_B$ on $X_A$ is defined as $p(x_A \mid \Do(x_B))$. Identifiability is an important property of causal effects, and it is defined as follows:

\begin{defin}[Identifiability]
Given a causal graph $\cG = (V \cup U, E)$ and two disjoint sets $A, B \subseteq V$, the causal effect $p(x_A \mid \Do(x_B))$ is identifiable from $p(x_V)$ in $\cG$ if it is uniquely computable from $p(x_V)$ in every causal model that induces $\cG$.
\end{defin}

Since we are only interested in causal effects that are identifiable from the joint distribution, we henceforth simply say that a causal effect is identifiable in $\cG$ without specifying the distribution $p(x_V)$. Furthermore, we restrict ourselves to study identifiability from positive distributions, that is, distributions that satisfy $p(x_V) > 0$ for all value assignments $x_V$, which ensures that all causal effects and identifying functionals are well-defined.

Given a (non)identifiable causal effect, operations that modify causal graphs and preserve the (non)identifiability of the given causal effect (or a family of causal effects) are called identification invariant \citep{ttk2025}. A formal definition of this property is given below.
\begin{defin}[Identification invariance] \label{def_identif_invariance}
Suppose an operation $\Omega$ that maps a causal graph $\cG$ to a causal graph $\cG' = \Omega(\cG)$.  

Then:
\begin{itemize}
    \item $\Omega$ is \textit{identification invariant} for a causal effect $p(x_{A} \mid \Do(x_B))$ if the causal effect is either identifiable in $\cG$ and $\cG'$ or nonidentifiable in $\cG$ and $\cG'$.
    \item $\Omega$ is identification invariant for a family of causal effects $\cC = (p(x_{A_i} \mid \Do(x_{B_i}) ))_{i=1}^m$ if the operation $\Omega$ is identification invariant for each effect $p(x_{A_i} \mid \Do(x_{B_i})) \in \cC$.
\end{itemize}
\end{defin}

An important graphical structure associated with identifiability of causal effects is the c-component.
\begin{defin}[c-component]
A causal graph $\cG = (V \cup U, E)$ is a c-component if for every $v_i, v_j \in V$ there exists a bidirected path $v_i \dashleftrightarrow v_{i + 1} \dashleftrightarrow \cdots \dashleftrightarrow v_{j - 1} \dashleftrightarrow v_j$, where the notation $v_k \dashleftrightarrow v_\ell$ denotes a fork $v_k \leftarrow u \rightarrow v_\ell$ for some $u \in U$.
\end{defin}

In other words, a causal graph is a c-component if and only if a subset of the bidirected arcs forms a spanning tree over all observed vertices in the graph \citep{shpitser_pearl_1}. A c-tree is a special type of c-component used to characterize identifiability of direct effects.

\begin{defin}[c-tree]
A causal graph $\cG = (V \cup U, E)$ is a c-tree if $\cG$ is a c-component and every observed vertex $v \in V$ has at most one child. If there is a vertex $y \in V$ such that $\cG[\Anc_{\cG}(y)] = \cG$, then $\cG$ is $y$-rooted c-tree
\end{defin}

A c-forest generalizes the concept of a c-tree by allowing a root set instead of a single root vertex.

\begin{defin}[c-forest]
A causal graph $\cG = (V \cup U, E)$ is a c-forest with a root set $Y = \left\{ v \in V : \Dsc_{\cG}^*(v) = \emptyset \right\}$ if $\cG$ is a c-component and every observed vertex $v \in V$ has at most one child.
\end{defin}

Two c-forests with a special relation form a hedge, which characterize identifiability of general causal effects.

\begin{defin}[Hedge] \label{def_hedge}
Let $A, B \subseteq V$ be disjoint sets in a causal graph $\cG = (V \cup U, E)$. A pair $(\cF, \cF_{*})$ is a hedge for the causal effect $p(x_A \mid \Do(x_B))$ provided that:
\begin{enumerate}[label=\rm{(\arabic*)}]
    \item $\cF = (V_{\cF} \cup U_{\cF}, E_{\cF})$ and $\cF_{*} = (V_{\cF_{*}} \cup U_{\cF_{*}}, E_{\cF_{*}})$ are $R$-rooted c-forests, where $R \subset V$; \label{prop_hedge_forests}
    \item $V_{\cF} \cap B = \emptyset$, $V_{\cF_{*}} \cap B \neq \emptyset$ and $\cF \subseteq \cF_{*}$; and \label{prop_hedge_inclusion}
    \item $R \subseteq \Anc_{\cG[\overline{B}]}(A)$. \label{prop_hedge_root}
\end{enumerate}
\end{defin}

The identification algorithm \textsc{ID} by \citet{shpitser_pearl_1} is a sound and complete algorithm that verifies whether a causal effect is identifiable from $p(x_V)$ in a causal graph, and outputs its identifying formula if identifiability holds. The importance of hedges stems from the fact that \textsc{ID} algorithm fails only if it detects a hedge for the studied causal effect, that is, nonidentifiability is equivalent to the presence of a hedge.

\begin{prop}[{\citet[Proposition 1]{shpitser2023doesidalgorithmfail}}] \label{thm_hedge_criterion}
A causal effect $p(x_A \mid \Do(x_B))$ is identifiable in a causal graph $\cG = (V \cup U, E)$ if and only if there does not exist a hedge for $p(x_{A} \mid \Do(x_{B}))$ in $\cG$.
\end{prop}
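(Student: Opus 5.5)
The plan is to prove the two directions separately. For each, I reduce to the structure of the \textsc{ID} algorithm of \citet{shpitser_pearl_1} and to the two-model construction used for hedges.

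\emph{No hedge implies identifiability.} I use soundness of \textsc{ID}: whenever it returns an expression, that expression equals $p(x_A \mid \Do(x_B))$ in every model inducing $\cG$. It therefore suffices to show that \textsc{ID} can only fail by exhibiting a hedge for the effect in the sense of \Cref{def_hedge}. I trace the recursion.
\begin{itemize}
\item At the failure line, the current subgraph has a single c-component $S$ equal to the whole current vertex set, and the current intervention set is nonempty.
\item Some c-component $S' \supseteq S$ of the original graph was the one entered on the path to failure.
\item The sets $S$ and $S'$ are c-components. I prune incoming and outgoing edges until every observed vertex has at most one child. Ancestral closure toward the current outcome set $R$ guarantees that both become $R$-rooted c-forests $\cF \subseteq \cF_*$.
\item The failure condition gives $V_{\cF} \cap B = \emptyset$ and $V_{\cF_*} \cap B \neq \emptyset$.
\end{itemize}

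The delicate part is condition \ref{prop_hedge_root}, namely $R \subseteq \Anc_{\cG[\overline{B}]}(A)$. I would show that every recursive call of \textsc{ID} keeps its outcome set inside $\Anc_{\cG[\overline{B}]}(A)$. The steps to check are:
\begin{itemize}
\item the initial restriction to $\Anc(A)$;
\item the step that adds to the intervention set vertices that are non-ancestors of $A$ in $\cG[\overline{B}]$;
\item the c-component decomposition step, where the new outcome sets are c-components of $\cG[V \setminus B]$ restricted to ancestors of $A$.
\end{itemize}
This invariant is exactly the point that \citet{shpitser2023doesidalgorithmfail} corrected relative to the original hedge formulation. I expect it to be the main obstacle, since each line of \textsc{ID} must be checked to preserve ancestry in the mutilated graph $\cG[\overline{B}]$ rather than merely in $\cG$.

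\emph{Hedge implies nonidentifiability.} Given a hedge $(\cF,\cF_*)$ with root set $R$, I construct two models $\cM_1,\cM_2$ inducing $\cG$, following \citet{shpitser_pearl_1}:
\begin{itemize}
\item All variables are binary.
\item In $\cM_1$, each vertex of $\cF_*$ equals the parity (XOR) of its parents and its unobserved parents. In $\cM_2$, the same holds for the vertices of $\cF_*$ but only the parents lying in $\cF$ enter the parity.
\item Vertices outside $\cF_*$ receive trivial mechanisms.
\end{itemize}
A counting argument on the bidirected spanning tree shows that both models give the same observational distribution, which is positive after mixing in small noise. Intervening on $B \cap V_{\cF_*} \neq \emptyset$ changes the parity distribution of $R$ in $\cM_1$ but not in $\cM_2$, so $p(x_R \mid \Do(x_B))$ differs between the two models.

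To pass from $R$ to $A$, I use $R \subseteq \Anc_{\cG[\overline{B}]}(A)$. I choose directed paths from $R$ to $A$ that avoid $B$ and let each vertex on them copy its parent on the path (again made positive by noise). The discrepancy in $R$ then propagates to $p(x_A \mid \Do(x_B))$, while the two observational distributions still agree. This gives nonidentifiability and completes the equivalence.
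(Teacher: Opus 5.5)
The paper gives no proof of this statement. It quotes it from \citet{shpitser2023doesidalgorithmfail} and uses it as a black box in \cref{lemma_hedge_implies}. Your reconstruction therefore has to stand on its own, and as written it has gaps in both directions.

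\textbf{No hedge $\Rightarrow$ identifiable.} The pair returned at the failure line of \textsc{ID} is $(G_{\mathrm{cur}}, S)$. Here $G_{\mathrm{cur}}$ is the current graph, which is a single c-component, and $S$ is the unique c-component of $G_{\mathrm{cur}}$ minus the current intervention set $X_{\mathrm{cur}}$. No c-component of the original graph is involved, and a forest built on the whole current vertex set would meet $B$, so your assignment of $\cF \subseteq \cF_*$ is off.

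More seriously, failure only gives $V_{\cF_*} \cap X_{\mathrm{cur}} \neq \emptyset$, and in general $X_{\mathrm{cur}} \neq B$. Line~3 adds a set $W$ of vertices that do not reach the outcome once the intervention set is cut. After a c-component split, the intervention set becomes the whole current graph except $s_i$, so it also contains vertices of $W$ and of other components $s_j$. Hence $V_{\cF_*} \cap B \neq \emptyset$ is exactly what must be proved. One way is to show two things:
\begin{enumerate}
\item Every directed path from a $W$-vertex to the current outcome set passes through $B$. So a $W$-vertex in the ancestral set $G_{\mathrm{cur}}$ brings a $B$-vertex into $G_{\mathrm{cur}}$.
\item A vertex of another $s_j$ can be bidirected-connected to $s_i$ inside $G_{\mathrm{cur}}$ only through $B \cup W$.
\end{enumerate}
Likewise, $V_{\cF} \cap B = \emptyset$ needs the invariant $B \cap G_{\mathrm{cur}} \subseteq X_{\mathrm{cur}}$. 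This step is where an \textsc{ID} failure is turned into a hedge for the original query rather than for a sub-query. By comparison, condition \ref{prop_hedge_root}, which you single out as the main obstacle, is routine. Each new outcome set $s_i$ reaches the old one along paths avoiding $X_{\mathrm{cur}} \supseteq B \cap G_{\mathrm{cur}}$, and $R$ can be taken to be the outcome set of the failing call.

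\textbf{Hedge $\Rightarrow$ nonidentifiable.} Three steps fail as stated.
\begin{enumerate}
\item Your $\cM_2$ lets every vertex of $\cF_*$ keep only its parents in $\cF$. But a vertex of $\cF_*$ outside $\cF$ has no $\cF_*$-parent (observed or unobserved) lying in $\cF$, so it becomes constant and the observational distributions already differ. For example, take $b \to r$, $b \leftrightarrow r$, $\cF=\{r\}$, $\cF_*=\{b,r\}$: then $x_b$ is uniform in $\cM_1$ but constant in $\cM_2$. The restriction must apply only to the vertices of $\cF$.
\item Noise cannot be mixed in everywhere. Summing the parity equations shows that the root parity equals the noise parity over $V_{\cF_*}$ in $\cM_1$ but over $V_{\cF}$ in $\cM_2$. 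In the example, with flip probability $\varepsilon$ on both vertices, $P(x_r=1)$ is $2\varepsilon(1-\varepsilon)$ versus $\varepsilon$. So the noise has to be confined to $\cF$, for example to a single root.
\item Letting each vertex on the chosen paths copy its parent is ill-defined when paths from different roots merge. It also breaks the construction when a path from $R$ to $A$ runs through a vertex of $\cF_*$, whose mechanism is already fixed by the parity construction. This can happen, because a root may be an ancestor in $\cG[\overline{B}]$ of another hedge vertex. You need a directed forest from $R$ to $A$ in $\cG[\overline{B}]$ with an additive or tuple-valued propagation rule. For instance, enlarge the state spaces so that hedge vertices carry an extra coordinate that relays the root information.
\end{enumerate}
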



\section{Clustering in Causal Graphs} \label{sec_clustering}

\begin{toappendix}
  \label{apx_clusteringproof}
\end{toappendix}

We are interested in reducing the size of a DAG by replacing a vertex set (i.e., a cluster) by a single vertex --- an operation which we call \emph{clustering}. In this section we give a rigorous definition of clustering. As an important class of clusters, we introduce coherent clusters and study their properties. The effect of clustering on identifiability of causal effects is further studied in \cref{sec_identif}.

Clustering of a vertex set is defined as follows:

\begin{defin}[Clustering] \label{def_clustering}
Let $\cG = (V \cup U, E)$ be a causal graph. Clustering a vertex set $T \subset V$ induces a causal graph $\cG' = (V' \cup U', E')$ such that:
\begin{itemize}
    \item $V' = (V \setminus T) \cup \left\{ t \right\}$, where the vertex $t$ represents the set $T$.
    \item $U' = U \setminus \left\{ u \in U \mid \Ch{\cG}{u} \subset T \right\}$, that is, all unobserved vertices whose both children are in $T$ are excluded from $U'$.
    \item $E'$ is obtained from $E$ by removing all edges connected to $T$ (including the edges $t_i \rightarrow t_j$, where $t_i, t_j \in T$), and adding new edges connecting $t$ to $V \setminus T$ in the following way:
    \begin{enumerate}[label=$*$]
        \item the edge $v_i \rightarrow t$ is in $E'$ if and only if there is a $t_i \in T$ and an edge $v_i \rightarrow t_i$ in $E$.
        \item the edge $t \rightarrow v_j$ is in $E'$ if and only if there is a $t_j \in T$ and an edge $t_j \rightarrow v_j$ in $E$.
    \end{enumerate}
\end{itemize}
\end{defin}
Given an SCM $\cM$ that induces a causal graph $\cG$, then there exists a causal model $\cM'$ which induces the clustered graph $\cG'$ such that $\cM'$ can be constructed from $\cM$ by replacing the set of variable $X_T = (X_{t_i})_{t_i \in T}$ with a single vector-valued variable $X_t$ that has members of $X_T$ as its components, and modifying the structural equations of $X_t$ and its children accordingly. Furthermore, $\cM'$ is necessarily semi-Markovian, because by the definition of clustering the unobserved vertices in the clustered graph $\cG'$ cannot have more than two children.

We note that if there is a $u \in U$ whose both children are in the set $T$, then after clustering the vertex $u$ has only one child, namely the vertex $t$, and by convention we omit unobserved vertices with only one child (i.e, the dedicated error terms) in causal graphs, which is the reason why such unobserved vertices are not included in $U'$.

Moreover, we disallow clustering of unobserved vertices (i.e., $T \cap U = \emptyset$) for the following reasons. First, if one clusters an unobserved vertex whose other child is not in the clustered set, then it is not always clear whether the vertex representing the cluster should be considered an observed or an unobserved vertex. Second, if one only clusters those unobserved vertices whose both children are in the cluster, then they become dedicated errors terms in the clustered model, so they are implicitly clustered together with their children in the clustering operation. Third, this way we do not need to specify the unobserved variables in the graphs, and we can denote them by bidirected arcs to increase visual clarity.

\smallskip

In the case of clustering, the operation $\Omega$ appearing in the definition of identification invariance (\cref{def_identif_invariance}) is the clustering operation $\Omega_{T}(\cG) = \cG'$, where $\cG$ is the original graph and $\cG'$ is the graph induced by clustering of the given vertex set $T$. We are especially interested in those clustering operations that are identification invariant for all causal effects not involving variables of the clustered set, a property which we call universal identification invariance. Formally we define this property as follows:

\begin{defin} [Universal identification invariance]
Suppose a vertex set $T \subset V$ in a causal graph $\cG = (V \cup U, E)$. The clustering of $T$ is \emph{universally identification invariant} if the operation is identification invariant for all causal effects $p(x_A \mid \Do(x_B))$ such that $A \cap B = \emptyset$ and $A, B \subset V \setminus T$.
\end{defin}

\smallskip

We introduce the following notation for clusters that are augmented with their unobserved parents whose both children are inside the cluster. This notation is useful when considering the parents of the cluster, since often we want to include only those unobserved parents whose other child is outside the cluster.

\begin{defin}
Suppose a vertex set $T \subset V$ in a causal graph $\cG = (V \cup U, E)$. We define 
\begin{equation*}
T^+ = T \cup \left\{ u \in U \mid \Ch{\cG}{u} \subset T \right\}.
\end{equation*}
\end{defin}

We proceed to define clustering equivalent vertex sets.

\begin{defin}[Clustering equivalent vertex sets]
Suppose a set of vertices $T \subset V$ in a causal graph $\cG = (V \cup U, E)$, and let $\cG' = (V' \cup U', E')$ be the graph induced by the clustering of $T$ with the vertex $t$ representing the set $T$. Vertex sets $W \subset V \cup U$ and $W' \subset V' \cup U'$ are called clustering equivalent provided that $W \setminus T = W' \setminus \left\{ t \right\}$ and $T \subset W$ if and only if $t \in W'$.
\end{defin}

In this paper we define clustering as a graph operation that outputs a causal graph, that is, a DAG. However, without further restrictions, \cref{def_clustering} allows clustering operations that induce directed cycles, an example of which is provided in \Cref{fig_cycles}. Furthermore, arbitrary clustering can introduce new directed connections not present in the original graph, with a possible side effect that a causal effect that is identifiable in the original graph may not be identifiable in the clustered graph. Examples of such scenarios are provided in \Cref{fig_violated_paths_no_identif,fig_plain_not_coherent}.

\begin{figure}
\centering
\subcaptionbox{
Clustering the set $T_1 = \left\{ t_1, t_2 \right\}$ creates a directed cycle in the clustered graph.\label{fig_cycles}
}[0.25\textwidth]{
\begin{tikzpicture}[scale=0.8, transform shape, >={Stealth[scale=1]}]
\node[shape=circle,draw=black] (v1) at (0,0) {$v_1$};
\node[shape=circle,draw=black, fill=Cinvalid] (r) at (0,-2) {$t_1$};
\node[shape=circle,draw=black, fill=Cinvalid] (e) at (0,-4) {$t_2$};
\node[shape=circle,draw=black] (v2) at (-2,-2) {$v_2$};
\node[shape=circle,draw=black] (v3) at (-2,-4) {$v_3$};
\node[shape=circle,draw=black] (v4) at (0,-6) {$v_4$};

\path [->] (v1) edge node[right] {} (r);
\path [->] (v2) edge node[right] {} (v1);
\path [->] (v4) edge node[right] {} (v3);
\path [->] (v3) edge node[right] {} (v2);
\path [->] (e) edge node[right] {} (v4);

\end{tikzpicture}
}
\hfill
\subcaptionbox{Clustering the vertex sets $T_2 = \left\{ t_1, t_2 \right\}$ forms a new directed path $b \rightarrow \cdots \rightarrow a$, which also renders the causal effect $p(x_a \mid \Do(x_b))$ nonidentifiable.\label{fig_violated_paths_no_identif}}[0.275\textwidth]{

\begin{tikzpicture}[scale=0.8, transform shape, >={Stealth[scale=1]}]
\node[shape=circle,  minimum size=0.80cm, draw=black] (b) at (-2,2) {$b$};
\node[shape=circle,draw=black] (v1) at (-4,2) {$v_1$};
\node[shape=circle,  minimum size=0.80cm, draw=black] (a) at (-2,-4) {$a$};
\node[shape=circle,draw=black, fill=Cinvalid] (t1) at (-4,0) {$t_1$};
\node[shape=circle,draw=black, fill=Cinvalid] (t2) at (-2,-2) {$t_2$};
\node[shape=circle,draw=black] (v2) at (-2,0) {$v_2$};
\node[shape=circle,draw=black] (v3) at (-4,-2) {$v_3$};

\path [->] (b) edge node[right] {} (v1);
\path [->] (v1) edge node[right] {} (t1);
\path [->] (t1) edge node[right] {} (v2);
\path [->] (t2) edge node[right] {} (v2);
\path [->] (t2) edge node[right] {} (a);
\path [->] (v3) edge node[right] {} (t1);
\path [->] (v3) edge node[right] {} (t2);

\path [<->,dashed] (v1) edge[bend left=50] node[right] {} (b);
\path [<->,dashed] (b) edge[bend left=50] node[right] {} (a);

\end{tikzpicture}
}
\hfill
\subcaptionbox{$T_3 = \left\{ t_1, t_2 \right\}$ is not coherent, because in every path from $v_1$ to $a$ the non-endpoint vertices are not contained in $T_3$. Consequently identifiability of $p(x_a \mid \Do(x_b))$ is lost in clustering of $T_3$. \label{fig_plain_not_coherent}}[0.375\textwidth]{

\begin{tikzpicture}[scale=0.8, transform shape, >={Stealth[scale=1]}, every node/.style={minimum height=0.80cm}]
\node[shape=circle,draw=black] (v1) at (0,0) {$v_1$};
\node[shape=circle,draw=black] (v2) at (2,1) {$v_2$};
\node[shape=circle,draw=black] (v3) at (4,3) {$v_3$};
\node[shape=circle,draw=black, fill=Cinvalid] (t1) at (2,-1) {$t_1$};
\node[shape=circle,draw=black, fill=Cinvalid] (t2) at (4,-1) {$t_2$};
\node[shape=circle,draw=black] (b) at (6,1) {$b$};
\node[shape=circle,draw=black] (a) at (6,-2) {$a$};

\path [->] (v1) edge node[right] {} (v2);
\path [->] (v1) edge node[right] {} (t1);
\path [->] (v2) edge node[right] {} (b);
\path [->] (v3) edge node[right] {} (b);
\path [->] (b) edge node[right] {} (a);
\path [->] (t2) edge node[right] {} (a);
\path [->] (t1) edge node[right] {} (b);

\path [<->,dashed] (v1) edge[bend left=50] node[right] {} (v3);
\path [<->,dashed] (v3) edge[bend left=50] node[right] {} (b);
\path [<->,dashed] (a) edge[bend left=50] node[right] {} (v1);

\end{tikzpicture}
}
\caption{Examples of clustering operations with undesired side effects.}
\label{fig_not_coherent}
\end{figure}

To remedy the abovementioned cases, we introduce a \textit{coherent} vertex set for which we ensure that there exists a directed path from every parent of the set to every child of the set such that the path goes directly through the set. Formally, we define coherence as follows.

\begin{defin}[Coherence] \label{def_coherence}
A set of vertices $T \subset V$ in a causal graph $\cG = (V \cup U, E)$ is coherent provided that for all $p \in \Pa{\cG}{T^+}$ and $c \in \Ch{\cG}{T}$ there exists a directed path $p \rightarrow t_1 \rightarrow \cdots \rightarrow t_m \rightarrow c$ such that $t_1, \ldots, t_m \in T$ and $m \geq 1$.
\end{defin}

We say that a clustering operation is coherent if the clustering is applied to a coherent set of vertices in $\cG$. In the following lemma we show that coherent clustering does not create directed cycles or new directed connections not present in the original graph. Additionally, we show that a directed path can intersect a coherent cluster at most once, that is, there is at most one maximal subpath that consists solely of vertices of the cluster. The proof of the lemma is presented in Appendix~\ref{apx_clusteringproof}.

\begin{lemrep} \label{lemma_coherence_properties}
Let $T \subset V$ be a coherent cluster in a causal graph $\cG = (V \cup U, E)$, and let $\cG'$ be the graph obtained by clustering the set $T$ with the vertex $t$ representing the set $T$. Then:
\begin{enumerate}[label=\rm{(\roman*)}]
    \item Clustering of $T$ does not create directed cycles, that is, $\cG'$ is a DAG. \label{prop_coherence_no_cycles} 
    \item If a directed path $w_1 \rightarrow \cdots \rightarrow w_k$, where $w_1 \in V \cup U$ and $w_2, \ldots, w_k \in V$, contains members of $T$, then the vertices $\left\{ w_1, \ldots, w_k \right\} \cap T$ form a subpath $t_1 \rightarrow \cdots \rightarrow t_m$ of $w_1 \rightarrow \ldots \rightarrow w_k$. In other words, each directed path can intersect a coherent vertex set at most once. \label{prop_coherence_one_maximal_subpath}
    \item For all $w \in V \setminus T$ it holds $\Anc_{\cG}(w) \setminus T^+ = \Anc_{\cG'}(w) \setminus \left\{ t \right\}$. Consequently $w_i, w_j \in (V \cup U) \setminus T^+$ are connected via a directed path in $\cG$ if and only if they are connected via a directed path in $\cG'$. \label{prop_coherence_ancestors} 
\end{enumerate}
\end{lemrep}

\begin{proof}
\textbf{Case \ref{prop_coherence_no_cycles}:} Let us assume a directed cycle $v_i \rightarrow \cdots \rightarrow v_i$ in $\cG'$. Without loss of generality we can assume that $v_i \in V \setminus T$, because the cycle necessarily contains other vertices than just the vertex $t$, that is, if there is a directed path from $t$ to $t$ that contains the vertex $v_i$, then there is a directed path from $v_i$ to $v_i$ as well. Furthermore, $v_i$ cannot be an unobserved vertex since unobserved vertices do not have parents.

Suppose first that the cycle does not include the vertex $t$. In this case exactly the same cycle exists in $\cG$, which contradicts the property that causal graphs are DAGs. On the other hand, if the cycle does contain the vertex $t$, then the cycle has a subpath $v_{j} \rightarrow t \rightarrow v_k$ such that $v_j \in \Pa{\cG}{T^+}$ and $v_k \in \Ch{\cG}{T}$ (if the cycle is $v_i \rightarrow t \rightarrow v_i$, then one lets $v_j = v_k = v_i$). Applying \cref{def_coherence} for $p = v_j$ and $c = v_k$ yields a directed cycle $v_i \rightarrow \cdots \rightarrow v_j \rightarrow t_1 \rightarrow \cdots \rightarrow t_m \rightarrow v_k \rightarrow \cdots \rightarrow v_i$ in $\cG$, where $t_1, \ldots, t_m \in T$. This again contradicts the property that $\cG$ is a DAG. 

\textbf{Case \ref{prop_coherence_one_maximal_subpath}:} Let us claim the opposite: there is a directed path $w_1 \rightarrow \cdots \rightarrow w_k$, where $w_1 \in V \cup U$ and $w_k \in V$, that intersects the set $T$ at least twice. In other words, there are at least two subpaths $w_{i_1} \rightarrow t_1 \rightarrow \cdots \rightarrow t_m \rightarrow w_{i_2}$ and $w_{j_1} \rightarrow t_{m + 1} \rightarrow \cdots \rightarrow t_{m + \ell} \rightarrow w_{j_2}$ in $\cG$ such that $w_{i_1}, w_{i_2}, w_{j_1}, w_{j_2} \notin T$ and $t_1, \ldots, t_m, t_{m + 1}, \ldots, t_{m + \ell} \in T$. However, then in $\cG'$ we have the path 
\begin{equation*}
w_1 \rightarrow \cdots \rightarrow w_{i_1} \rightarrow t \rightarrow w_{i_2} \rightarrow \cdots \rightarrow w_{j_1} \rightarrow t \rightarrow w_{j_2} \rightarrow \cdots \rightarrow w_k,
\end{equation*}
which contains a directed cycle $t \rightarrow \cdots \rightarrow t$, contradicting \ref{prop_coherence_no_cycles}.

\smallskip

\textbf{Case \ref{prop_coherence_ancestors}:} Let us fix a $w_j \in V \setminus T$. To show the first inclusion, we assume a vertex $w_i \in \Anc_{\cG}(w_j) \setminus T^+$. Now there is a directed path $w_i \rightarrow \cdots \rightarrow w_j$ in $\cG$. If the path does not contain the vertices of $T$, then exactly the same path exists in $\cG'$, and if the path does have at least one vertex of $T$, then by \ref{prop_coherence_one_maximal_subpath} it contains exactly one subpath $w_{i + k} \rightarrow t_1 \rightarrow \cdots \rightarrow t_m \rightarrow w_{j - \ell}$ such that $w_{i + k}, w_{j - \ell} \in V \setminus T$ and $t_1, \ldots, t_m \in T$. By the definition of clustering we can replace this subpath by $w_{i + k} \rightarrow t \rightarrow w_{j - \ell}$ to obtain a directed path from $w_i$ to $w_j$ in $\cG'$. This shows that $\Anc_{\cG}(w_j) \setminus T^+ \subset \Anc_{\cG'}(w_j) \setminus \left\{ t \right\}$.

To show the other inclusion, suppose that $w_i \in \Anc_{\cG'}(w_j) \setminus \{ t \}$. Directed paths from $w_i$ to $w_j$ that do not contain the vertex $t$ exist in $\cG$ as well. If there is a directed path $w_i \rightarrow \cdots \rightarrow w_j$ that contains the vertex $t$, then the path contains exactly one subpath $w_{i + k} \rightarrow t \rightarrow w_{j - \ell}$ such that $w_{i + k}, w_{j - \ell} \in V \setminus T$, because otherwise there would be a directed cycle $t \rightarrow \cdots \rightarrow t$. We replace this subpath by any path $w_{i + k} \rightarrow t_1 \rightarrow \cdots \rightarrow t_m \rightarrow w_{j - \ell}$ such that $t_1, \ldots, t_m \in T$, which is enabled by \cref{def_coherence}. This yields $\Anc_{\cG'}(w_j) \setminus \left\{ t \right\} \subset \Anc_{\cG}(w_j) \setminus T^+$.

The second assertion follows immediately, because there is a directed path from $w_i$ to $w_j$ if and only if $w_i$ is an ancestor of $w_j$
\end{proof}

Transit clusters \citep[see Appendix~\ref{apx_transit} for the definition]{thk2023} are coherent because by definition there is a directed path from any parent of the cluster to any child of the cluster.
However, the opposite does not apply, that is, all coherent clusters are not necessarily transit clusters. \Cref{fig_tr_cluster} shows an example of a transit cluster that indeed satisfies coherence, while \Cref{fig_coherent} presents a coherent cluster that is not a transit cluster. Comparing these two figures one observes that coherence requires significantly fewer edges than a transit cluster, but coherence also allows inclusion of bidirected arcs that would be prohibited by the conditions of a transit cluster, which is because transit clusters impose strong assumptions on incoming edges to the cluster.

On the other hand, coherence is not a property enforced in the settings of \citet{anand2023cdags,ferreira2025identifying,ferreira2026identifying} or \citet{yvernes2026relaxing}. This directly relates to the goal of this paper to maintain the identification invariance of the clustering operation. We focus on identification invariance via coherent clusters, which is discussed more explicitly in the following section.

\begin{figure}
\centering
\subcaptionbox{The set $T_1 = \{ t_1, t_2, t_3, t_4 \}$ is a coherent cluster that is also a transit cluster.\label{fig_tr_cluster}}[0.45\textwidth]{

\begin{tikzpicture}[scale=0.8, transform shape, >={Stealth[scale=1]}]
\node[shape=circle, draw=black] (p1) at (0,0) {$p_1$};
\node[shape=circle, draw=black] (p2) at (2,0) {$p_2$};
\node[shape=circle, draw=black, fill=Cgreen] (t1) at (0,-2) {$t_1$};
\node[shape=circle, draw=black, fill=Cgreen] (t2) at (2,-2) {$t_2$};
\node[shape=circle, draw=black, fill=Cgreen] (t3) at (0,-4) {$t_3$};
\node[shape=circle, draw=black, fill=Cgreen] (t4) at (2,-4) {$t_4$};
\node[shape=circle, draw=black] (c1) at (0,-6) {$c_1$};
\node[shape=circle, draw=black] (c2) at (2,-6) {$c_2$};

\path [->] (t2) edge node[right] {} (t3);
\path [->] (p1) edge node[right] {} (t1);
\path [->] (p2) edge node[right] {} (t2);
\path [->] (p1) edge node[right] {} (t2);
\path [->] (p2) edge node[right] {} (t1);

\path [->] (t1) edge node[right] {} (t3);
\path [->] (t2) edge node[right] {} (t4);
\path [->] (t3) edge node[right] {} (c1);
\path [->] (t4) edge node[right] {} (c1);
\path [->] (t4) edge node[right] {} (c2);
\path [->] (t3) edge node[right] {} (c2);

\path [<->,dashed] (t3) edge[bend left=25] node[right] {} (t4);

\end{tikzpicture}
}
\hfill
\subcaptionbox{The set $T_2 = \{ t_1, t_2, t_3, t_4 \}$ is coherent, but not a transit cluster.\label{fig_coherent}}[0.45\textwidth]{

\begin{tikzpicture}[scale=0.8, transform shape, >={Stealth[scale=1]}]
\node[shape=circle, draw=black] (p1) at (0,0) {$p_1$};
\node[shape=circle, draw=black] (p2) at (2,0) {$p_2$};
\node[shape=circle, draw=black, fill=Cgreen] (t1) at (0,-2) {$t_1$};
\node[shape=circle, draw=black, fill=Cgreen] (t2) at (2,-2) {$t_2$};
\node[shape=circle, draw=black, fill=Cgreen] (t3) at (0,-4) {$t_3$};
\node[shape=circle, draw=black, fill=Cgreen] (t4) at (2,-4) {$t_4$};
\node[shape=circle, draw=black] (c1) at (0,-6) {$c_1$};
\node[shape=circle, draw=black] (c2) at (2,-6) {$c_2$};

\path [->] (p1) edge node[right] {} (t1);
\path [->] (p2) edge node[right] {} (t2);
\path [->] (p1) edge node[right] {} (t2);

\path [->] (t4) edge node[right] {} (c1);
\path [->] (t2) edge node[right] {} (t4);
\path [->] (t3) edge node[right] {} (c1);
\path [->] (t4) edge node[right] {} (c2);

\path [<->,dashed] (t2) edge[bend left=50] node[right] {} (c2);
\path [<->,dashed] (t3) edge[bend left=50] node[right] {} (t4);

\end{tikzpicture}
}
\caption{Examples of coherent clusters (\cref{def_coherence}).}
\label{fig_ex_coherence}
\end{figure}


\section{Identification Invariance of Clustering Operations} \label{sec_identif} 

\begin{toappendix}
  \label{apx_idproof}
\end{toappendix}

In this section we study clustering operations that are universally identification invariant, that is, clustering operations that preserve identifiability of those causal effects that do not involve the variables of the clustered set. We establish sufficient conditions for universal identification invariance of clustering operations based on the c-components of the original graph. In the end of this section we study identification invariance of iterated clustering operations. The proofs of this section are presented in Appendix~\ref{apx_idproof}. 

Throughout this section, we say that if $p(x_A \mid \Do(x_B))$ is a causal effect in a causal graph $\cG$, then the corresponding causal effect in the clustered graph $\cG'$ is $p'(x_A \mid \Do(x_B))$, provided that $A, B \subset V \setminus T$. We denote by $p$ and $p'$ distributions Markov-relative to $\cG$ and $\cG'$, respectively.

\smallskip

Do-calculus \citep{pearl_1} consists of operations that allow manipulation of interventional distributions. \citet{anand2023cdags} show that the rules of do-calculus are also valid in the clustered graph. This has the following consequences: First, if the clustering operation does not create directed cycles, then for each causal effect that is identifiable in the clustered graph, the corresponding causal effect is identifiable in the original graph. In particular this result holds if the clustering operation is coherent due to \cref{lemma_coherence_properties} \ref{prop_coherence_no_cycles}, but the result is also valid without the coherence assumption. Second, we can derive an identifying functional for the causal effect in the original graph from any identifying functional for the corresponding causal effect in the clustered graph. We present these results using our notation below. 

\begin{lem} \label{thm_identif_functional}
Let $A, B \subset V$ be disjoint sets in a causal graph $\cG = (V \cup U, E)$. Suppose a set of vertices $T \subset V \setminus (A \cup B)$ such that the graph $\cG' = (V' \cup U', E')$ induced by the clustering of $T$ is a DAG. If the causal effect $p'(x_A \mid \Do(x_B))$ is identifiable in $\cG'$ with an identifying functional $g(p'(x_{V'}))$ (that is, $p'(x_A \mid \Do(x_B)) = g(p'(x_{V'}))$), then $p(x_A \mid \Do(x_B))$ is identifiable in $\cG$ with a functional $g(p(x_V))$, which is obtained from $g(p'(x_{V'}))$ by replacing $x_t$ by the vector $x_T$. 
\end{lem}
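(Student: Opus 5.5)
The plan is a model-transfer argument: every causal model inducing $\cG$ gives rise to a causal model inducing $\cG'$ with the same observational and interventional distributions, up to the relabelling $x_t \leftrightarrow x_T$. Identifiability in $\cG'$ then forces identifiability in $\cG$, with the same functional.

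Fix an arbitrary SCM $\cM = (X_V, X_{\widetilde U}, \F, p)$ inducing $\cG$ with $p(x_V) > 0$. I construct $\cM'$ as sketched after \cref{def_clustering}:
\begin{itemize}
\item replace $X_T$ by the vector variable $X_t = (X_{t_i})_{t_i \in T}$;
\item define $f'_t$ by recursively solving the equations $(f_{t_i})_{t_i \in T}$ inside the cluster. Its inputs are the external parents $\Pa{\cG}{T^+} \cap V \setminus T$, the shared latents of $T$ whose other child lies outside $T$, and, as a new dedicated error term, the vector of dedicated errors of $T$ together with the latents in $T^+ \setminus T$;
\item for each child $c$ of $T$, rewrite $f_c$ as a function of $x_t$ by projecting onto the needed components;
\item leave all other equations unchanged.
\end{itemize}
The recursion inside $T$ is well defined because $\cG[T]$ is a subgraph of the DAG $\cG$. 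The whole model $\cM'$ is recursive because $\cG'$ is assumed to be a DAG. By construction, the graph induced by $\cM'$ is a subgraph of $\cG'$, with exactly the edge and latent structure prescribed by \cref{def_clustering}. The combined latents in $T^+ \setminus T$ become dedicated errors of $t$, which matches $U'$.

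The key step is to check that the two models agree, namely
\[
p'(x_{V'}) = p(x_V) \quad\text{and}\quad p'(x_A \mid \Do(x_B)) = p(x_A \mid \Do(x_B))
\]
under the identification $x_t = x_T$. Both models are deterministic maps of the same exogenous vector with the same law, and their solutions coincide componentwise, so the observational distributions agree. Since $T \cap B = \emptyset$, the intervention $\Do(x_B)$ replaces the same equations in both models. The submodels $\cM_B$ and $\cM'_B$ therefore again coincide after relabelling, and so do their marginals on $A$, because $A \cap T = \emptyset$. Positivity of $p$ carries over to $p'$.

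Identifiability in $\cG'$ means $p'(x_A \mid \Do(x_B)) = g(p'(x_{V'}))$ holds for every model inducing $\cG'$. It therefore holds for $\cM'$, and this gives $p(x_A \mid \Do(x_B)) = g(p(x_V))$ with $x_t$ replaced by $x_T$. Because $\cM$ was arbitrary, the effect is uniquely computable from $p(x_V)$, which is identifiability in $\cG$.

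The main obstacle is a subtlety in the definition of identifiability. It quantifies over models inducing the graph exactly, whereas $\cM'$ may induce a proper subgraph of $\cG'$, for instance when some functional dependence is degenerate. I would handle this in the standard way. An identifying functional for $\cG'$ obtained from do-calculus or the ID algorithm remains valid in every model compatible with $\cG'$, meaning Markov relative to it, including models whose induced graph has fewer edges. Alternatively, I would note that the validity of do-calculus in cluster graphs \citep{anand2023cdags} directly yields the same equality for all models compatible with $\cG'$.
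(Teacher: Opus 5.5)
Your proof is correct, but it takes a different route from the paper. The paper gives no standalone proof of this lemma. It presents it as a consequence of the result of \citet{anand2023cdags} that the rules of do-calculus remain valid in the clustered graph, and points to the analogous transit-cluster result \citet[Theorem 33]{thk2023}. On that route, an identification of $p'(x_A \mid \Do(x_B))$ in $\cG'$ (implicitly, via do-calculus or \textsc{ID}) stays valid in $\cG$ once $x_t$ is read as the vector $x_T$.

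You argue semantically instead. You turn each model $\cM$ inducing $\cG$ into a model $\cM'$ for $\cG'$, and check directly that the observational and $\Do(x_B)$-interventional distributions coincide, using $T \cap (A \cup B) = \emptyset$. The paper sketches this construction after \cref{def_clustering}, but only to note that $\cM'$ is semi-Markovian; it does not use it for this lemma.

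What each route buys:
\begin{itemize}
\item Yours is self-contained. It needs neither the cluster-graph do-calculus machinery nor completeness of do-calculus, and it applies to an arbitrary identifying functional $g$, not only to one produced by do-calculus or \textsc{ID}.
\item The paper's route reuses established theory. It gives validity for every distribution Markov relative to $\cG'$ at once.
\end{itemize}

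The obstacle you flag does not actually arise under the paper's definitions. There the induced graph is determined by the declared argument sets $\Prnt(v)$ and $\widetilde{U}(v)$ of the structural equations, not by actual functional dependence. If you declare all external observed parents and shared latents of $T$ as arguments of $f'_t$, and $t$ as an argument of each $f'_c$, then $\cM'$ induces exactly $\cG'$. So your argument closes without the fallback appeal to do-calculus.
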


An analogous result for transit clusters is proven by \citet[Theorem 33]{thk2023}.
\cref{thm_identif_functional} has the following implication: to obtain identification invariance of a clustering operation, it suffices to ensure that nonidentifiability of a causal effect in the clustered graph implies nonidentifiability of the same effect in the original graph. This is achieved by connecting identification invariance to the existence of hedges.

\begin{thmrep} \label{lemma_hedge_implies}
Assume a vertex set $T \subset V$ in a causal graph $\cG = (V \cup U, E)$ such that the graph $\cG' = (V' \cup U', E')$ induced by the clustering of $T$ is a DAG. Given disjoint sets $A, B \subset V \setminus T$, the following conditions are equivalent:
\begin{itemize}
    \item The clustering operation is identification invariant for $p(x_A \mid \Do(x_B))$.
    \item If there is a hedge for $p'(x_A \mid \Do(x_B))$ in $\cG'$, then there is a hedge for $p(x_A \mid \Do(x_B))$ in $\cG$.
\end{itemize}
\end{thmrep}

\begin{proof}
"$\implies$:" If there is a hedge for $p'(x_A \mid \Do(x_B))$ in $\cG'$, then by \cref{thm_hedge_criterion} the causal effect $p'(x_A \mid \Do(x_B))$ is nonidentifiable in $\cG'$. It follows from the definition of identification invariance and \cref{thm_hedge_criterion} that there is a hedge for the corresponding causal effect $p(x_A \mid \Do(x_B))$ in $\cG$.

\smallskip

"$\impliedby$:" If the causal effect $p'(x_A \mid \Do(x_B))$ is identifiable in $\cG'$, then by \cref{thm_identif_functional} the corresponding causal effect $p(x_A \mid \Do(x_B))$ is also identifiable in $\cG$. On the other hand, if $p'(x_A \mid \Do(x_B))$ is nonidentifiable in $\cG'$, then it follows from \cref{thm_hedge_criterion} that $p(x_A \mid \Do(x_B))$ is not identifiable in $\cG$ either.

If the causal effect $p(x_A \mid \Do(x_B))$ is identifiable in $\cG$, then there cannot be a hedge in $\cG$ by \cref{thm_hedge_criterion}, hence by the assumption there cannot be a hedge in $\cG'$, either, because that would imply the presence of the hedge in $\cG$. Finally, if $p(x_A \mid \Do(x_B))$ is nonidentifiable in $\cG$, then $p'(x_A \mid \Do(x_B))$ cannot be identifiable in $\cG'$, because that would violate \cref{thm_identif_functional}.
\end{proof}

According to \cref{lemma_hedge_implies}, to establish sufficient conditions for universal identification invariance of clustering operations, we need to find conditions such that clustering does not create hedges not present in the original graph. An example of a situation like this is provided in \Cref{fig_not_invariant}. To avoid such scenarios, we need additional assumptions on the c-components of the clusters. Sufficient conditions for transit clusters \citep[Definition 4]{thk2023} are proposed by \citet{thk2023}, and we aim to establish similar results for clusters that are merely coherent. As a reminder, since transit clusters are coherent, the results we establish in this section are also applicable to transit clusters. 

A transit cluster that does not have any unobserved parents (excluding those unobserved vertices whose both children are inside the clustered set) is called \emph{plain} according to \citet{thk2023}. This concepts generalizes directly to coherent clusters. 

\begin{defin}[Plainness] \label{def_plain}
A coherent cluster $T \subset V$ in a causal graph $\cG = (V \cup U, E)$ is \textit{plain} if $\Pa{\cG}{T^+} \subset V$, that is, the cluster does not have any unobserved parents that have one child in the cluster and one child outside the cluster.
\end{defin}

\begin{figure}[!t]
\centering
\subcaptionbox{Clustering the coherent cluster $T_1 = \left\{ t_1, t_2 \right\}$ does not preserve the identifiability of the causal effect ${p(x_a \mid \Do(x_b))}$.\label{fig_not_invariant}}[0.225\textwidth]{%
\begin{tikzpicture}[scale=0.8, transform shape, >={Stealth[scale=1]}]
\node[shape=circle, minimum size=0.80cm, draw=black] (b) at (0,0) {$b$};
\node[shape=circle, draw=black, fill=Cinvalid] (r) at (2,0) {$t_1$};
\node[shape=circle, draw=black, fill=Cinvalid] (e) at (2,-2) {$t_2$};
\node[shape=circle, minimum size=0.80cm, draw=black] (a) at (2,-4) {$a$};

\path [->] (b) edge node[right] {} (r);
\path [->] (r) edge node[right] {} (e);
\path [->] (e) edge node[right] {} (a);

\path [<->,dashed] (e) edge[bend left=50] node[right] {} (b);

\end{tikzpicture}
}
\hfill
\subcaptionbox{$T_2 = \left\{ t_1, t_2, t_3, t_4 \right\}$ is a plain cluster since there is no bidirected arc connecting $T_2$ to $V \setminus T$. \label{fig_plain}}[0.30\textwidth]{%
\begin{tikzpicture}[scale=0.8, transform shape, >={Stealth[scale=1]}]
\node[shape=circle,draw=black] (p1) at (0,0) {$b_1$};
\node[shape=circle,draw=black] (p2) at (2,0) {$b_2$};
\node[shape=circle,draw=black, fill=Cgreen] (r1) at (0,-2) {$t_1$};
\node[shape=circle,draw=black, fill=Cgreen] (r2) at (2,-2) {$t_2$};
\node[shape=circle,draw=black, fill=Cgreen] (e1) at (0,-4) {$t_3$};
\node[shape=circle,draw=black, fill=Cgreen] (e2) at (2,-4) {$t_4$};
\node[shape=circle,draw=black] (c1) at (0,-6) {$a_1$};
\node[shape=circle,draw=black] (c2) at (2,-6) {$a_2$};

\path [->] (p2) edge node[right] {} (r1);
\path [->] (p1) edge node[right] {} (r1);
\path [->] (p2) edge node[right] {} (r2);
\path [->] (r1) edge node[right] {} (e1);
\path [->] (r2) edge node[right] {} (e2);
\path [->] (e1) edge node[right] {} (c2);
\path [->] (e1) edge node[right] {} (c1);
\path [->] (e2) edge node[right] {} (c2);

\path [<->,dashed] (p2) edge[bend left=50] node[right] {} (c2);
\path [<->,dashed] (r2) edge[bend left=50] node[right] {} (e2);
\path [<->,dashed] (e1) edge[bend left=50] node[right] {} (e2);
\end{tikzpicture}
}
\hfill
\subcaptionbox{Every vertex of the cluster $T_3 = \left\{ t_1, t_2, t_3 \right\}$ belongs to the same c-component, but $T_3$ is not a transit cluster. Clustering the set $T_3$ renders the causal effect $p'(x_a \mid \doo(x_b))$ nonidentifiable in the clustered graph. \label{fig_congested_but_not_coherent}}[0.35\textwidth]{%
\begin{tikzpicture}[scale=0.8, transform shape, >={Stealth[scale=1]}]
\node[shape=circle, draw=black, fill=Cinvalid] (r) at (-2,0) {$t_1$};
\node[shape=circle,draw=black, fill=Cinvalid] (e1) at (-2,-2) {$t_2$};
\node[shape=circle,draw=black, fill=Cinvalid] (e2) at (0,-2) {$t_3$};
\node[shape=circle, minimum size=0.80cm, draw=black] (b) at (-2,-4) {$b$};
\node[shape=circle,draw=black] (v1) at (0,-4) {$v_1$};
\node[shape=circle, minimum size=0.80cm, draw=black] (a) at (-2,-6) {$a$};

\path [->] (r) edge node[right] {} (e1);
\path [->] (e1) edge node[right] {} (b);
\path [->] (b) edge node[right] {} (a);
\path [->] (e2) edge node[right] {} (v1);
\path [->] (v1) edge node[right] {} (b);
\path [->] (e1) edge node[right] {} (v1);
\path [->] (r) edge node[right] {} (e2);

\path [<->,dashed] (a) edge[bend left=50] node[right] {} (r);
\path [<->,dashed] (b) edge[bend left=30] node[right] {} (r);
\path [<->,dashed] (e1) edge[bend left=25] node[right] {} (e2);
\path [<->,dashed] (r) edge[bend left=50] node[right] {} (e2);

\end{tikzpicture}
}

\vspace{0.5cm}
\subcaptionbox{$T_4 = \left\{ t_1, t_2, t_3, t_4 \right\}$ is a c-coherent cluster with $t^* = t_1$. However, $T_4$ is not c-isolated due to the presence of a bidirected arc from $t_1$ to $a_1$. \label{fig_c_coherent}}[0.31\textwidth]{%
\begin{tikzpicture}[scale=0.8, transform shape, >={Stealth[scale=1]}]
\node[shape=circle,draw=black] (p1) at (-2,0) {$b_1$};
\node[shape=circle,draw=black] (p2) at (0,0) {$b_2$};
\node[shape=circle,draw=black] (p3) at (2,0) {$b_3$};
\node[shape=circle,draw=black, fill=Cgreen] (r1) at (0,-2) {$t_1$};
\node[shape=circle,draw=black, fill=Cgreen] (r2) at (2,-2) {$t_2$};
\node[shape=circle,draw=black, fill=Cgreen] (e1) at (0,-4) {$t_3$};
\node[shape=circle,draw=black, fill=Cgreen] (e2) at (2,-4) {$t_4$};
\node[shape=circle,draw=black] (c1) at (0,-6) {$a_1$};
\node[shape=circle,draw=black] (c2) at (2,-6) {$a_2$};

\path [->] (p3) edge node[right] {} (r1);
\path [->] (p2) edge node[right] {} (r1);
\path [->] (p3) edge node[right] {} (r2);
\path [->] (r1) edge node[right] {} (e1);
\path [->] (r2) edge node[right] {} (e2);
\path [->] (e2) edge node[right] {} (c1);
\path [->] (e1) edge node[right] {} (c1);
\path [->] (e2) edge node[right] {} (c2);
\path [->] (r1) edge node[right] {} (e2);
\path [->] (p1) edge node[right] {} (p2);

\path [<->,dashed] (c1) edge[bend left=50] node[right] {} (r1);
\path [<->,dashed] (r1) edge[bend left=50] node[right] {} (p1);
\path [<->,dashed] (r1) edge[bend left=30] node[right] {} (e2);
\end{tikzpicture}
}
\hfill
\subcaptionbox{$T_5 = \left\{ t_1, t_2, t_3, t_4 \right\}$ is a c-isolated cluster with $t^* = t_1$, but $T_5$ is not c-coherent, because $\cG[\{t_1, t_3\}^+]$ is not a c-component. \label{fig_c_isolated}}[0.31\textwidth]{%
\begin{tikzpicture}[scale=0.8, transform shape, >={Stealth[scale=1]}]
\node[shape=circle,draw=black] (p1) at (-2,0) {$b_1$};
\node[shape=circle,draw=black] (p2) at (0,0) {$b_2$};
\node[shape=circle,draw=black] (p3) at (2,0) {$b_3$};
\node[shape=circle,draw=black, fill=Cgreen] (r1) at (0,-2) {$t_1$};
\node[shape=circle,draw=black, fill=Cgreen] (r2) at (2,-2) {$t_2$};
\node[shape=circle,draw=black, fill=Cgreen] (e1) at (0,-4) {$t_3$};
\node[shape=circle,draw=black, fill=Cgreen] (e2) at (2,-4) {$t_4$};
\node[shape=circle,draw=black] (c1) at (0,-6) {$a_1$};
\node[shape=circle,draw=black] (c2) at (2,-6) {$a_2$};

\path [->] (p3) edge node[right] {} (r1);
\path [->] (p2) edge node[right] {} (r1);
\path [->] (p3) edge node[right] {} (r2);
\path [->] (r1) edge node[right] {} (e1);
\path [->] (r2) edge node[right] {} (e2);
\path [->] (e1) edge node[right] {} (c2);
\path [->] (e1) edge node[right] {} (c1);
\path [->] (e2) edge node[right] {} (c2);
\path [->] (p1) edge node[right] {} (p2);

\path [<->,dashed] (r1) edge[bend left=50] node[right] {} (p1);
\path [<->,dashed] (r1) edge[bend left=25] node[right] {} (e2);
\path [<->,dashed] (e1) edge[bend left=30] node[right] {} (e2);
\end{tikzpicture}
}
\hfill
\subcaptionbox{$T_6 = \left\{ t_1, t_2, t_3, t_4 \right\}$ is not c-isolated due to the lack of coherence. Accordingly identifiability of $p(x_{a_1} \mid \Do(x_{b_1}))$ is lost in clustering of $T_6$. \label{fig_not_c_isolated}}[0.31\textwidth]{%
\begin{tikzpicture}[scale=0.8, transform shape, >={Stealth[scale=1]}, every node/.style={minimum height=0.80cm}]
\node[shape=circle,draw=black] (b1) at (0,0) {$b_1$};
\node[shape=circle,draw=black] (b2) at (2,0) {$b_2$};
\node[shape=circle,draw=black, fill=Cinvalid] (t1) at (0,-2) {$t_1$};
\node[shape=circle,draw=black, fill=Cinvalid] (t2) at (2,-2) {$t_2$};
\node[shape=circle,draw=black, fill=Cinvalid] (t3) at (0,-4) {$t_3$};
\node[shape=circle,draw=black, fill=Cinvalid] (t4) at (2,-4) {$t_4$};
\node[shape=circle,draw=black] (a1) at (0,-6) {$a_1$};
\node[shape=circle,draw=black] (a2) at (2,-6) {$a_2$};

\path [->] (b1) edge node[right] {} (t1);
\path [->] (b2) edge node[right] {} (t1);
\path [->] (b1) edge node[right] {} (t2);
\path [->] (b2) edge node[right] {} (t2);
\path [->] (t2) edge node[right] {} (t4);
\path [->] (t4) edge node[right] {} (t3);
\path [->] (t4) edge node[right] {} (a2);
\path [->] (t3) edge node[right] {} (a1);
\path [->] (t4) edge node[right] {} (a1);
\path [->] (t4) edge node[right] {} (a2);

\path [<->,dashed] (t1) edge[bend left=50] node[right] {} (b1);

\end{tikzpicture}
}
\caption{Various causal graphs that demonstrate under which conditions universal identification invariance of the clustering operation holds (green), but also when this property is lost (red).}
\end{figure}

Plain clusters are characterized by the property that all bidirected arcs connected to the cluster are always disconnected from the vertices outside the cluster, implying that all c-components involving vertices of a plain cluster are strictly within in the cluster. An example of a plain cluster is provided in \Cref{fig_plain}.

We proceed to show that clustering a plain vertex set is a universally identification invariant operation.

\begin{thmrep} \label{thm_identif_plain}
Suppose a plain cluster $T \subset V$ in a causal graph $\cG = (V \cup U, E)$. Then clustering of $T$ is a universally identification invariant operation.
\end{thmrep}

\begin{proof}
Let us fix disjoint sets $A, B \subset V \setminus T$. In the following we denote by $\cG' = (V' \cup U', E')$ the graph induced by the clustering of $T$, and we let $t \in V'$ be the vertex representing the set $T$. According to \cref{lemma_hedge_implies}, it suffices to show that the presence of a hedge $(\cF', \cF_{*}')$ in $\cG'$ implies the presence of a corresponding hedge $(\cF, \cF_{*})$ in $\cG$.

We claim that  exactly the same hedge exists in $\cG$, that is, $(\cF, \cF_{*}) = (\cF', \cF_{*}')$ is a hedge for $p(x_A \mid \Do(x_B))$ in $\cG$. We show that the conditions \ref{prop_hedge_forests}--\ref{prop_hedge_root} of \cref{def_hedge} hold:
\begin{enumerate}[label=\rm{(\arabic*)}]

    \item There cannot be a bidirected arc connecting $V' \setminus \left\{ t \right\}$ to $\left\{ t \right\}$, because otherwise in $\cG$ there would be a fork $v_j \leftarrow u \rightarrow t_i$ such that $v_j \in V \setminus T$, $u \in U$ and $t_i \in T$, implying $\Pa{\cG}{T^+} \cap U \neq \emptyset$ and contradicting the plainness assumption. This has the following implications: all c-components in $\cG'$ (excluding the single vertex c-component that consists solely of the vertex $t$) are also present in $\cG$, and there cannot be a c-component that contains the vertex $t$ and some other vertex in $\cG'$. Thus, because $B$ is disjoint from $T$ and $V'_{\cF'} \cap B \neq \emptyset$, neither $\cF'$ nor $\cF_{*}'$ can contain the vertex $t$. It follows that $\cF = \cF'$ and $\cF_* = \cF_{*}'$ are c-forests in $\cG$.
    
    We still need to validate that $\cF$ and $\cF_*$ have the same root set. Let $R$ be the root set of the c-forests in $\cG$, and $R'$ the root set in $\cG'$. Since the vertex $t$ is excluded from the c-forests, it necessarily holds $R \cap T = R' \cap \left\{ t \right\} = \emptyset$, so applying \cref{lemma_coherence_properties} \ref{prop_coherence_ancestors} yields $\Anc_{\cG}(R) \setminus T = \Anc_{\cG'}(R') \setminus \left\{ t \right\}$. This verifies that $R$ is the root set of both $\cF$ and $\cF_{*}$.

    \item This is apparent because $V_{\cF} = V'_{\cF'}$ and $V_{\cF_*} = V'_{\cF_{*}'}$.
    
    \item According to \cref{lemma_subgraph_coherence} in Appendix~\ref{sec_appendix_aux} we have the inclusion 
    \begin{equation*}
    R = R \setminus T = R' \setminus \left\{ t \right\} \subset \Anc_{\cG'[\overline{B}]}(A) \setminus \left\{ t \right\} = \Anc_{\cG[\overline{B}]}(A) \setminus T \subset \Anc_{\cG[\overline{B}]}(A),
    \end{equation*}
    where we used the property $R' \subset \Anc_{\cG'[\overline{B}]}(A)$ to get the first inclusion.

\end{enumerate}
We conclude that $(\cF', \cF_{*}')$ is a hedge in $\cG$.
\end{proof}

\smallskip

\citet{thk2023} also define a congested transit cluster, which is a transit cluster where each vertex in the cluster belongs to the same c-component. However, for clusters that are merely coherent such a condition does not guarantee universal identification invariance of the clustering operation. This is demonstrated in \Cref{fig_congested_but_not_coherent}, which presents a coherent cluster whose members belong to the same c-component, and yet, the clustering operation is not universally identification invariant. Thus, instead of attempting to generalize the concept of a congested transit cluster, we introduce a comparable condition, which we call c-coherence.

\begin{defin}[c-coherence] \label{def_c_coherent}
A coherent cluster $T \subset V$ in a causal graph $\cG = (V \cup U, E)$ is \textit{c-coherent} if there exists a vertex $t^* \in T$ such that:
\begin{itemize}
    \item $\Pa{\cG}{t^*} \setminus T^+ = \Pa{\cG}{T^+}$, and
    \item for each $c \in \Ch{\cG}{T}$, it either applies that $t^* \in \Pa{\cG}{c}$, or there exists a directed path $t^* \rightarrow t_1 \rightarrow \cdots \rightarrow t_m \rightarrow c$ in $\cG$ such that $t_1, \ldots t_{m} \in T$ and the graph $\cG[P^+]$ induced by the vertex set $P = \left\{ t^*, t_1, \ldots, t_m \right\}$ is a c-component.
\end{itemize}
\end{defin}

The presence of the vertex $t^*$ in \cref{def_c_coherent} has the following implications: First, if there is an edge $w \rightarrow t$ in the clustered graph $\cG'$, where the vertex $t$ represents the cluster $T$, then in the original graph $\cG$ we have the edge $w \rightarrow t^*$. In particular this holds if $w \in U$. Consequently, all bidirected arcs connecting $V \setminus T$ to $T$ are always connected to $t^*$. An example of a c-coherent cluster is provided in \Cref{fig_c_coherent}, where $t^* = t_1$.

Second, the assumption on the paths from $t^*$ to $\Ch{\cG}{T}$ ensures that for each path $w \rightarrow t \rightarrow v$ in the clustered graph $\cG'$, where $w \in V \cup U$ and $v \in V$, there is a corresponding path $w \rightarrow t^* \rightarrow t_1 \rightarrow \cdots \rightarrow t_m \rightarrow v$ in $\cG$ such that $t_1, \ldots, t_m \in T$. Furthermore, if the original path belongs to the same c-component --- or even to the same c-forest --- in $\cG'$, then so does the corresponding path in $\cG$.

We next show that clustering a c-coherent cluster is also a universally identification invariant operation. 
\begin{thmrep} \label{thm_identif_c_coherent}
Suppose a c-coherent cluster $T \subset V$ in a causal graph $\cG = (V \cup U, E)$. Then the clustering of $T$ is a universally identification invariant operation.
\end{thmrep}

\begin{proof} 
Let $A, B \subset V \setminus T$ be disjoint. We denote by $\cG' = (V' \cup U', E')$ be the graph induced by the clustering of $T$, and by $t \in V'$ the vertex representing the set $T$. According to \cref{lemma_hedge_implies}, it suffices to show that if there is a hedge $(\cF', \cF_{*}')$ in $\cG'$, then we can always find a corresponding hedge $(\cF, \cF_{*})$ in $\cG$. We assume that the hedge in $\cG'$ contains the vertex $t$, because otherwise exactly the same hedge exists in $\cG$, which is proven similarly to in the proof of \cref{thm_identif_plain}. 

Let us denote the root set of $(\cF', \cF_{*}')$ by $R'$. We construct the hedge $(\cF, \cF_{*})$ as follows: If $t \in R'$, then we replace $t$ by the vertex $t^*$ provided by \cref{def_c_coherent} in both c-forests, which is enabled by the property that $\Pa{\cG}{t^*} \setminus T^+ = \Pa{\cG}{T^+}$. The root set in $\cG$ is then $R = (R' \setminus \left\{ t \right\}) \cup \left\{t^* \right\}$. On the other hand, if $t$ is not in the root set, then it has exactly one child $v_j \in V$ in $\cF'_*$. By \cref{def_c_coherent} there is a directed path $t^* \rightarrow t_1 \rightarrow \cdots \rightarrow t_m \rightarrow v_j$ such that $t_1, \ldots, t_m \in T$ and the graph $\cG[\left\{ t^*, t_1, \ldots, t_m \right\}^+]$ is a c-component. We replace the vertex $t$ by this c-component in $\cF'_*$. If $t$ is also present in $\cF'$, then we perform the same substitution in $\cF'$ as well. The root set remains the same, that is, $R = R'$.

We prove that $(\cF, \cF_{*})$ is a hedge by showing that the conditions \ref{prop_hedge_forests}--\ref{prop_hedge_root} of \cref{def_hedge} are satisfied:
\begin{enumerate}[label=\rm{(\arabic*)}]
    \item We observe that the assumption $\Pa{\cG}{t^*} \setminus T^+ = \Pa{\cG}{T^+}$ ensures that if there is a bidirected arc connecting $v_i \in V' \setminus \left\{ t \right\}$ to $\left\{ t \right\}$ in $\cG'$, then in $\cG$ there is a corresponding bidirected arc connecting $v_i$ to $t^*$. Furthermore, $t^*$ belongs to each of the c-components we formed by replacing $t$ by the directed path from $t^*$ to $t_m$. Bidirected arcs between vertices outside the cluster are unaffected. Consequently, $\cF$ and $\cF_{*}$ are c-components, and since in the substitutive path $t^* \rightarrow t_1 \rightarrow \cdots \rightarrow t_m$ each observed vertex has exactly one child, $\cF$ and $\cF_{*}$ are in fact c-forests.
    
    We still need to show that $R$ is the root set of the c-forests. Let us first consider the case $t \in R'$. In this case the only modification we made to the c-forests was to replace $t$ by $t^*$, and since $t^*$ has exactly the same parents as $t$, it is apparent that $R = (R' \setminus \left\{ t \right\}) \cup \left\{ t^* \right\}$ is the root set. In the case $t \notin R'$ it follows from \cref{lemma_coherence_properties} \ref{prop_coherence_ancestors} that $\Anc_{\cG}(v) \setminus T = \Anc_{\cG'}(v) \setminus \left\{ t \right\}$ for all $v \in V'_{\cF_{*}'} \setminus \left\{ t \right\} = V_{\cF_{*}} \setminus T$, which then implies that $R = R'$ is the root set of both c-forests also in $\cG$.

    \item Since $B$ is disjoint from $T$, it follows from the identities $V_{\cF} \setminus T = V'_{\cF'} \setminus \left\{ t \right\}$ and $V_{\cF_{*}} \setminus T = V'_{\cF_{*}'} \setminus \left\{ t \right\}$ that $V_{\cF} \cap B = \emptyset$, $V_{\cF_{*}} \cap B \neq \emptyset$ and $\cF \subset \cF_{*}$.
    
    \item We exploit \cref{lemma_subgraph_coherence} in Appendix~\ref{sec_appendix_aux} and the property $R' \subset \Anc_{\cG'[\overline{B}]}(A)$ to get
    \begin{equation*}
    R \setminus T = R' \setminus \left\{ t \right\} \subset \Anc_{\cG'[\overline{B}]}(A) \setminus \left\{ t \right\} = \Anc_{\cG[\overline{B}]}(A) \setminus T \subset \Anc_{\cG[\overline{B}]}(A).
    \end{equation*}
    If $R' \cap \left\{ t \right\} = \emptyset$, then also $ R\cap T = \emptyset$, so it follows immediately that $R \subset \Anc_{\cG[\overline{B}]}(A)$. On the other hand, if $t \in R'$, then $R \setminus \left\{ t^* \right\} = R' \setminus \left\{ t \right\} \subset \Anc_{\cG[\overline{B}]}(A)$. To show that $t^* \in \Anc_{\cG[\overline{B}]}(A)$, we first observe that because $t \in \Anc_{\cG'[\overline{B}]}(A)$, there is a directed path $t \rightarrow v_1 \rightarrow \cdots \rightarrow v_j \rightarrow a$ for some $a \in A$ in $\cG'[\overline{B}]$. Applying \cref{def_c_coherent} we find a corresponding path $t^* \rightarrow t_1 \rightarrow \cdots \rightarrow t_m \rightarrow v_i \rightarrow \cdots \rightarrow v_j \rightarrow a$ in $\cG$ for some $t_1, \ldots, t_m \in T$. Recalling that $B \cap T = \emptyset$, this path also exists in $\cG[\overline{B}]$. Thus, $t^* \in \Anc_{\cG[\overline{B}]}(A)$.
\end{enumerate}
We conclude that $(\cF, \cF_{*})$ is a hedge for the causal effect $p(x_A \mid \Do(x_B))$ in $\cG$.
\end{proof}

The definition of c-coherence has strong assumptions on the directed paths inside the cluster. These assumptions can be relaxed by requiring that there are no bidirected paths (i.e., paths that consist solely of bidirected arcs) from the vertex $t^*$ to the descendants of the cluster, a property which we call c-isolation.

\begin{defin}[c-isolation] \label{def_c_isolated}
A coherent cluster $T \subset V$ in a causal graph $\cG = (V \cup U, E)$ is \textit{c-isolated} if there exists a vertex $t^* \in T$ such that $\Pa{\cG}{t^*} \setminus T^+ = \Pa{\cG}{T^+}$, and for each c-component $\cD = (V^* \cup U^*, E^*)$ in $\cG$ it either holds $V^* \cap T = \emptyset$ or $V^* \cap \Dsc_{\cG}^*(T) = \emptyset$, that is, there is no bidirected path from $T$ to $\Dsc_{\cG}^*(T)$.
\end{defin}

In practice it suffices to verify that there is no bidirected path from $t^*$ to $\Dsc_{\cG}^*(T)$ in $\cG$ (or equivalently no bidirected path from $t$ to $\Dsc_{\cG'}^*(t)$ in $\cG'$), because each bidirected arc connecting $V \setminus T$ to $T$ necessarily connects to $t^*$ by the assumption $\Pa{\cG}{t^*} \setminus T^+ = \Pa{\cG}{T^+}$. 

An example of a c-isolated cluster is provided in \Cref{fig_c_isolated}. The set $T_5 = \{ t_1, t_2, t_3, t_4 \}$ is c-isolated, because there is no bidirected path connecting $t^* = t_1$ to the set $\Dsc_{\cG}^*(T_5) = \{ a_1, a_2 \}$, but $T_5$ is not c-coherent, because the only directed path from $t^* = t_1$ to $a_2$ is $t_1 \rightarrow t_3 \rightarrow a_2$, but the subgraph $\cG[\{ t_1, t_3 \}^+]$ is not a c-component. In \Cref{fig_c_coherent}, the vertex set $T_4$ is not c-isolated, because there is a bidirected arc from $t_1$ to $a_1 \in \Dsc_{\cG}^*(T_4)$. However, if one removes this arc, then $T_4$ becomes both c-coherent and c-isolated, illustrating that these two conditions are not mutually exclusive. 

If in the definition of coherence (\cref{def_coherence}) one requires that only the \emph{observed} parents $\Pa{\cG}{T^+} \cap V$ are connected to $\Ch{\cG}{T}$ via specific paths, then c-isolation no longer guarantees universal identification invariance. This can be seen in \Cref{fig_not_c_isolated}, where each directed path from the unobserved vertex denoted by the bidirected arc between $t_1$ and $b_1$ goes through the vertex $b_1 \in V \setminus T$. Even though there is no bidirected paths from $T_6$ to its descendants, the clustering operation renders the causal effect $p(x_{a_1} \mid \Do(x_{b_2}))$ nonidentifiable in the clustered graph.

The similarities and differences of c-coherence and c-isolation can be summarised as follows: Unlike plainness (\cref{def_plain}), both c-coherence and c-isolation allow bidirected arcs connecting $T$ and $V \setminus T$, and in both conditions all bidirected arcs connected to $T$ have to connect to a single vertex $t^*$. Unlike c-isolation, c-coherence allows bidirected arcs from $T$ to its descendants, but one needs additional assumptions on how the vertex $t^*$ connects to the children of the cluster, because otherwise clustering can create a hedge not present in the original graph. For c-isolated clusters one does not need to know anything about the paths inside the cluster, because c-isolation always ensures that if the vertex $t$ representing the cluster $T$ in the clustered graph belongs to a hedge, then $t$ is necessarily in the root set.

We proceed to show that clustering a c-isolated vertex set is a universally identification invariant operation.

\begin{thmrep} \label{thm_identif_c_isolation}
Suppose a c-isolated set $T \subset V$ in a causal graph $\cG = (V \cup U, E)$. Then clustering of $T$ is a universally identification invariant operation.
\end{thmrep}

\begin{proof}
Let $A, B \subset V \setminus T$ be disjoint sets. We denote by $\cG' = (V' \cup U', E')$ the graph induced by clustering of $T$, and we let $t \in V'$ be the vertex representing the set $T$. According to \cref{lemma_hedge_implies}, it suffices to show that the presence of a hedge $(\cF', \cF_{*}')$ in $\cG'$ implies the presence of a corresponding hedge $(\cF, \cF_{*})$ in $\cG$. If the hedge in $\cG'$ does not contain the vertex $t$, then exactly the same hedge exists in $\cG$, which can be seen using a similar argument as in the proof of \cref{thm_identif_plain}.

Let us then suppose that the vertex $t$ is present at least in the larger c-forest $\cF_*'$. Because there cannot be a c-component which contains both $t$ and its descendants, $t$ is necessarily in the root set $R'$. We construct a hedge $(\cF, \cF_{*})$ in $\cG$ by replacing $t$ by $t^*$ in $\cF_*'$, where it is necessarily present, and also in $\cF'$ in the case $t \in V'_{\cF'}$. The root set is then $R = (R' \setminus \left\{ t \right\}) \cup \left\{ t^* \right\}$. This construction yields a hedge because we only modify the root set by replacing one vertex with another one with similar properties, i.e., all bidirected arcs connecting to $t$ also connect to $t^*$ by the assumption that $\Pa{\cG}{t^*} \setminus T^+ = \Pa{\cG}{T^+}$.
\end{proof}


The three conditions presented above, namely plainness, c-coherence and c-isolation, do not provide complete conditions to assess universal identification invariance. Instead, the purpose is to provide sufficient conditions that are relatively straightforward to verify from the original graph. To obtain a complete condition, one would need to impose conditions specific to each graph --- or alternatively apply \cref{lemma_hedge_implies} and verify whether a presence of a hedge in the clustered $\cG'$ always implies a presence of a hedge in the original graph $\cG$ for each causal effect not involving the variables of the clustered set.

\begin{figure}[!t]
\centering
\subcaptionbox{$T_1 = \{ t_1, t_2\}$ satisfies other conditions of plainness, but not coherence. \label{fig_almost_plain}}[0.45\textwidth]{

\begin{tikzpicture}[scale=0.8, transform shape, >={Stealth[scale=1]}]
\node[shape=circle, draw=black] (b1) at (0,0) {$b_1$};
\node[shape=circle, draw=black] (b2) at (2,0) {$b_2$};
\node[shape=circle, draw=black, fill=Cgreen] (t1) at (0,-2) {$t_1$};
\node[shape=circle, draw=black, fill=Cgreen] (t2) at (2,-2) {$t_2$};
\node[shape=circle, draw=black] (a1) at (0,-4) {$a_1$};
\node[shape=circle, draw=black] (a2) at (2,-4) {$a_2$};

\path [->] (b2) edge node[right] {} (b1);
\path [->] (b1) edge node[right] {} (t1);
\path [->] (t1) edge node[right] {} (a1);
\path [->] (b2) edge node[right] {} (t2);
\path [->] (t2) edge node[right] {} (a2);
\path [->] (a1) edge node[right] {} (a2);

\path [<->,dashed] (b1) edge[bend left=50] node[right] {} (b2);
\path [<->,dashed] (b2) edge[bend left=50] node[right] {} (a2);
\path [<->,dashed] (t1) edge[bend left=50] node[right] {} (t2);

\end{tikzpicture}
}
\hfill
\subcaptionbox{The coherent cluster $T_2 = \{ t_1, t_2, t_3, t_4 \}$ is not c-coherent or c-isolated, because there are two different vertices inside the cluster that are connected to vertices outside the cluster via bidirected arcs.\label{fig_almost_c_coherent}}[0.45\textwidth]{
    
\begin{tikzpicture}[scale=0.8, transform shape, >={Stealth[scale=1]}]
\node[shape=circle, draw=black] (b1) at (0,0) {$b_1$};
\node[shape=circle, draw=black] (b2) at (2,0) {$b_2$};

\node[shape=circle, draw=black, fill=Cgreen] (t1) at (0,-2) {$t_1$};
\node[shape=circle, draw=black, fill=Cgreen] (t2) at (2,-2) {$t_2$};
\node[shape=circle, draw=black, fill=Cgreen] (t3) at (0,-4) {$t_3$};
\node[shape=circle, draw=black, fill=Cgreen] (t4) at (2,-4) {$t_4$};

\node[shape=circle, draw=black] (a1) at (0,-6) {$a_1$};
\node[shape=circle, draw=black] (a2) at (2,-6) {$a_2$};

\path [->] (b1) edge node[right] {} (t1);
\path [->] (t3) edge node[right] {} (a1);
\path [->] (b2) edge node[right] {} (t2);
\path [->] (t4) edge node[right] {} (a2);
\path [->] (t3) edge node[right] {} (t4);
\path [->] (t2) edge node[right] {} (t1);
\path [->] (t2) edge node[right] {} (t4);
\path [->] (t1) edge node[right] {} (t3);
\path [->] (b2) edge node[right] {} (t1);

\path [<->,dashed] (t1) edge[bend left=50] node[right] {} (b1);
\path [<->,dashed] (a1) edge[bend left=50] node[right] {} (t1);
\path [<->,dashed] (t3) edge[bend left=30] node[right] {} (t1);
\path [<->,dashed] (t4) edge[bend left=50] node[right] {} (t3);
\path [<->,dashed] (t2) edge[bend left=50] node[right] {} (a2);
\end{tikzpicture}
}
\caption{Examples of clusters (marked in green) for which the corresponding clustering operation is universally identification invariant, but none of the clusters is plain, c-coherent or c-isolated.} \label{fig_almost_applicable}
\end{figure}

Examples of clustering operations whose universal identification invariance cannot be deduced using the graphical conditions from \cref{sec_identif} are presented in \Cref{fig_almost_applicable}, where clustering of the vertex sets marked in green are universally identification invariant, but neither cluster is plain, c-coherent or c-isolated. The cluster $T_1 = \{ t_1, t_2 \}$ in \Cref{fig_almost_plain} is not plain due to the lack of coherence, and while the cluster $T_2 = \{ t_1, \ldots, t_4 \}$ in \Cref{fig_almost_c_coherent} is coherent, it is neither c-coherent nor c-isolated because there are bidirected arcs connecting vertices outside the cluster to two different vertices of the cluster.

\smallskip
The following result shows how identification invariance allows us to make conclusions about the nonidentifiability of causal effects.
\begin{thmrep} \label{thm_nonid}
Suppose a cluster $T \subset V$ in a causal graph $\cG = (V \cup U, E)$ such that clustering of $T$ in $\cG$ induces a DAG. Let $\cH$ be a graph obtained from $\cG$ by removing a subset 
$U_S \subseteq U$, the edges connected to $U_S$, and a subset of edges $v_i \rightarrow v_j$, $v_i,v_j \in V$,
so that clustering of $T$ is a universally identification invariant operation in $\cH$. Let $\cH'$ be the graph induced by clustering of $T$ in $\cH$. If a causal effect $p'(x_A \cond \doo(x_B))$, where $A,B \subset V \setminus T$, is not identifiable in $\cH'$, then $p(x_A \cond \doo(x_B))$ is not identifiable in $\cG$.
\end{thmrep}
\begin{proof}
As clustering of $T$ is a universally identification invariant operation in $\cH$, nonidentifiability in $\cH'$ implies nonidentifiability in $\cH$. 
The original graph $\cG$ can be obtained from $\cH$ by adding the vertices $U_S$ and the edges $E_S$ that were removed.
Inserting latent variables or new edges to $\cH$ cannot render a nonidentifiable causal effect identifiable, and therefore the causal effect $p(x_A \cond \doo(x_B))$ is not identifiable from $p(x_V)$ in $\cG$. 
\end{proof}

As an example of application of \Cref{thm_nonid}, we consider again \Cref{fig_almost_applicable}. Removing the edges $b_2 \rightarrow t_2$ and $t_2 \rightarrow a_2$ in \Cref{fig_almost_plain} makes the vertex set $T_1 = \{ t_1, t_2 \}$ coherent and plain. We observe that each causal effect of $X_{b_2}$ on the variables $X_{a_1}$, $X_{a_2}$ and $X_{b_1}$ is nonidentifiable in the graph obtained by removing the aforementioned edges and clustering of $T_2$. It follows from \cref{thm_identif_plain,thm_nonid} that the corresponding causal effects are also nonidentifiable in the original graph. 

In \Cref{fig_almost_c_coherent}, the cluster $T_2$ becomes c-coherent if the bidirected arc between $t_2$ and $a_2$ is removed. Studying the graph where the bidirected arc is removed and the set $T_2$ is clustered, we see that the causal effects of $X_{b_1}$ on the variables $X_{a_1}$ and $X_{a_2}$ are not identifiable. \Cref{thm_identif_c_coherent,thm_nonid} imply that the corresponding causal effects are not identifiable in the original graph either. 

\smallskip

We next show that if one applies multiple successive clustering operations that are identification invariant for a given causal effect, then the total clustering operation is also identification invariant for the given effect.

\begin{thmrep}[Iterative clustering] \label{thm_ident_multiple}
Let $A, B \subset V$ be disjoint sets in a causal graph $\cG = (V \cup U, E)$. Suppose a family of graphs $\cG^{(i)} = (V^{(i)} \cup U^{(i)}, E^{(i)})$ for $i = 0, \ldots, N$, and vertex sets $T^{(i)} \subset V^{(i)} \setminus (A \cup B)$ such that $\cG^{(0)} = \cG$ and $\cG^{(i)}$ is obtained from $\cG^{(i - 1)}$ by clustering the set $T^{(i)}$ for all $i = 1, \ldots, N$. Denote by $p^{(i)}$ a distribution Markov relative to $\cG^{(i)}$ such that $p^{(i)}(x_{V^{(i)}}) > 0$, and assume that for all $i = 1, \ldots, N$
\begin{itemize}
    \item the graph $\cG^{(i)}$ is a DAG, and 
    \item the operation $\Omega_{T^{(i)}}(\cG^{(i - 1)}) = \cG^{(i)}$ that clusters the set $T^{(i)}$ in the graph $\cG^{(i - 1)}$ is an identification invariant operation for the causal effect $p^{(i - 1)}(x_A \mid \Do(x_B))$.
\end{itemize}
Then the iterated clustering operation $\Omega(\cG) := (\Omega_{T^{(N)}} \circ \Omega_{T^{(N - 1)}} \circ \cdots \circ \Omega_{T^{(1)}}) (\cG) = \cG^{(N)}$ is identification invariant for the causal effect $p(x_A \mid \Do(x_B))$.
\end{thmrep}

\begin{proof}
We prove the claim by induction. In the case $N = 1$ the iterated clustering operation corresponds to clustering a single set, hence the identification invariance follows from the assumptions. Next suppose that the assertion holds for $N = k$, that is, the causal effect $p(x_A \mid \Do(x_B))$ is identifiable in $\cG$ if and only if the corresponding causal effect $p^{(k)}(x_A \mid \Do(x_B))$ identifiable in $\cG^{(k)}$. On the other hand, the operation that maps $\cG^{(k)}$ to $\cG^{(k + 1)}$ is assumed to be identification invariant for $p^{(k)}(x_A \mid \Do(x_B))$, that is, the causal effect $p^{(k)}(x_A \mid \Do(x_B))$ is identifiable in $\cG^{(k)}$ if and only if $p^{(k + 1)}(x_A \mid \Do(x_B))$ is identifiable in $\cG^{(k + 1)}$. Combining these two equivalence relations implies that the iterated operation $\Omega(\cG)$ is identification invariant for the given causal effect.
\end{proof}

Consequently, iterating multiple universally identification invariant clustering operations is also a universally identification invariant operation, as shown in the following corollary.

\begin{cororep} \label{coro_iterated_universal}
Suppose an iterated clustering operation
\begin{equation*}
\Omega(\cG) = (\Omega_{T^{(N)}} \circ \Omega_{T^{(N - 1)}} \circ \cdots \circ \Omega_{T^{(1)}}) (\cG) = \cG^{(N)}
\end{equation*}
defined as in \cref{thm_ident_multiple}. If the operation $\Omega_{T^{(i)}}$ is universally identification invariant for all $i = 1, \ldots, N$, then the iterated operation $\Omega(\cG)$ is identification invariant for each causal effect not involving variables $X_{T^{(j)}}$ for any $j = 1, \ldots, N$.
\end{cororep}

\begin{proof}
Each individual clustering operation $\Omega_{T^{(i)}}$ is identification invariant for all causal effects not involving the variables of the corresponding cluster $X_{T^{(i)}}$, so in particular the operation $\Omega_{T^{(i)}}$ is identification invariant for all causal effects not involving variables of any $X_{T^{(j)}}$, $j = 1, \ldots, N$. The assertion then follows from \cref{thm_ident_multiple}.
\end{proof}

An example how \cref{thm_ident_multiple} can be applied in practice is provided in \cref{sec_app}.


\section{Demonstration} \label{sec_app}

In this section we demonstrate how one can apply the identification results established in \cref{sec_identif} in practice. We use a causal graph adapted from \citet{evandt07noise}, but for illustrative purposes we have included bidirected arcs not present in the original graph. Explanation for different variables in the model is provided in \Cref{table_sleepmed} in Appendix~\ref{sec_appendix_table}.

\begin{figure}[!t]
\centering
\subcaptionbox{The original graph $\cG = (V, E)$. \label{fig_sleepmed_orig}}[0.65\textwidth]{

\begin{tikzpicture}[scale=0.8, transform shape, >={Stealth[scale=1]}, every node/.style={minimum height=0.80cm}]

\node[shape=circle, draw=black] (d) at (0, -2) {$d$};
\node[shape=circle, draw=black] (p) at (-8, -4) {$p$};
\node[shape=circle, draw=black] (e) at (-6,-2) {$e$};

\node[shape=circle, draw=black, fill=Cblue] (s) at (-2,-5) {$s$};
\node[shape=circle, draw=black, fill=Cblue] (c) at (0,-5) {$c$};
\node[shape=circle, draw=black, fill=Cblue] (l) at (2,-5) {$\ell$};

\node[shape=circle, draw=black, fill=Cgreen] (n1) at (-8,-8) {$n_1$};
\node[shape=circle, draw=black, fill=Cgreen] (n2) at (-6,-6) {$n_2$};
\node[shape=circle, draw=black, fill=Cgreen] (n3) at (-4,-4) {$n_3$};
\node[shape=circle, draw=black, fill=Cgreen] (a) at (-4,-6) {$a$};

\node[shape=circle, draw=black] (m) at (0,-8) {$m$};

\path [->] (d) edge node[right] {} (e);
\path [->] (d) edge node[right] {} (s);
\path [->] (d) edge node[right] {} (c);
\path [->] (d) edge node[right] {} (l);
\path [->] (e) edge node[right] {} (p);
\path [->] (e) edge node[right] {} (n2);
\path [->] (e) edge node[right] {} (n3);
\path [->] (p) edge node[right] {} (n1);
\path [->] (p) edge node[right] {} (n2);
\path [->] (s) edge node[right] {} (a);
\path [->] (s) edge node[right] {} (m);
\path [->] (c) edge node[right] {} (m);
\path [->] (l) edge node[right] {} (c);
\path [->] (l) edge node[right] {} (m);
\path [->] (a) edge node[right] {} (m);
\path [->] (n2) edge node[right] {} (a);
\path [->] (n3) edge node[right] {} (a);
\path [->] (n1) edge node[right] {} (m);
\path [->] (n2) edge node[right] {} (m);
\path [->] (n3) edge node[right] {} (m);

\path [<->,dashed,draw=blue] (e) edge[bend left=50] node[] {} (d);
\path [<->,dashed,draw=blue] (s) edge[bend left=40] node[] {} (d);
\path [<->,dashed,draw=blue] (s) edge[bend left=50] node[] {} (n2);

\path [<->,dashed,draw=blue] (n1) edge[bend left=25] node[] {} (n2);
\path [<->,dashed,draw=blue] (n2) edge[bend left=25] node[] {} (n3);
\path [<->,dashed,draw=blue] (s) edge[bend left=25] node[] {} (c);
\path [<->,dashed,draw=blue] (m) edge[bend left=50] node[] {} (p);

\end{tikzpicture}
}
\hfill
\subcaptionbox{A graph $\cG'$ obtained from $\cG$ by clustering the c-isolated vertex set $T_1 = \{ a, n_1, n_2, n_3 \}$ (here $t^* = n_2$).\label{fig_sleepmed_partially_clustered}}[0.375\textwidth]{

\begin{tikzpicture}[scale=0.8, transform shape, >={Stealth[scale=1]}, every node/.style={minimum height=0.80cm}]

\node[shape=circle, draw=black] (d) at (4, 0) {$d$};
\node[shape=circle, draw=black] (e) at (2, 0) {$e$};
\node[shape=circle, draw=black] (p) at (0, 0) {$p$};

\node[shape=circle, draw=black, fill=Cgreen] (t1) at (0,-2) {$t_1$};

\node[shape=circle, draw=black, fill=Cblue] (s) at (2,-2) {$s$};
\node[shape=circle, draw=black, fill=Cblue] (c) at (4,-2) {$c$};
\node[shape=circle, draw=black, fill=Cblue] (l) at (6,-2) {$\ell$};

\node[shape=circle, draw=black] (m) at (2,-4) {$m$};

\path [->] (d) edge node[right] {} (e);
\path [->] (d) edge node[right] {} (s);
\path [->] (d) edge node[right] {} (c);
\path [->] (d) edge node[right] {} (l);

\path [->] (s) edge node[right] {} (t1);
\path [->] (s) edge node[right] {} (m);
\path [->] (c) edge node[right] {} (m);
\path [->] (l) edge node[right] {} (c);
\path [->] (l) edge node[right] {} (m);

\path [->] (e) edge node[right] {} (p);
\path [->] (e) edge node[right] {} (t1);
\path [->] (p) edge node[right] {} (t1);
\path [->] (t1) edge node[right] {} (m);

\path [<->,dashed,draw=blue] (e) edge[bend left=50] node[] {} (d);
\path [<->,dashed,draw=blue] (s) edge[bend left=25] node[] {} (d);
\path [<->,dashed,draw=blue] (s) edge[bend left=25] node[] {} (t1);
\path [<->,dashed,draw=blue] (s) edge[bend left=25] node[] {} (c);
\path [<->,dashed,draw=blue] (m) edge[bend left=75] node[] {} (p);

\end{tikzpicture}
}
\hfill
\subcaptionbox{A graph $\cG''$ obtained from $\cG'$ clustering the c-coherent vertex set $T_2 = \{ c, \ell, s  \}$ in $\cG'$ (notice that $T_2$ is c-isolated in $\cG$, but not in $\cG'$). \label{fig_sleepmed_clustered}}[0.275\textwidth]{

\begin{tikzpicture}[scale=0.8, transform shape, >={Stealth[scale=1]}, every node/.style={minimum height=0.80cm}]

\node[shape=circle, draw=black] (d) at (4, 0) {$d$};
\node[shape=circle, draw=black] (e) at (2, 0) {$e$};
\node[shape=circle, draw=black] (p) at (0, 0) {$p$};

\node[shape=circle, draw=black, fill=Cgreen] (t1) at (1,-2) {$t_1$};
\node[shape=circle, draw=black, fill=Cblue] (t2) at (3,-2) {$t_2$};

\node[shape=circle, draw=black] (m) at (2,-4) {$m$};

\path [->] (d) edge node[right] {} (e);
\path [->] (d) edge node[right] {} (t2);
\path [->] (e) edge node[right] {} (p);
\path [->] (e) edge node[right] {} (t1);
\path [->] (p) edge node[right] {} (t1);
\path [->] (t2) edge node[right] {} (m);
\path [->] (t1) edge node[right] {} (m);
\path [->] (t2) edge node[right] {} (t1);

\path [<->,dashed,draw=blue] (e) edge[bend left=50] node[] {} (d);
\path [<->,dashed,draw=blue] (d) edge[bend left=50] node[] {} (t2);
\path [<->,dashed,draw=blue] (t2) edge[bend left=25] node[] {} (t1);
\path [<->,dashed,draw=blue] (m) edge[bend left=50] node[] {} (p);

\end{tikzpicture}
}
\hfill
\subcaptionbox{A graph $\cG^{(3)} = (V^{(3)}, E^{(3)})$ obtained from $\cG''$ by clustering the c-isolated vertex set $T_3 = \{ t_1, t_2 \}$. \label{fig_sleepmed_clustered_final}}[0.275\textwidth]{

\begin{tikzpicture}[scale=0.8, transform shape, >={Stealth[scale=1]}, every node/.style={minimum height=0.80cm}]

\node[shape=circle, draw=black] (d) at (4, 0) {$d$};
\node[shape=circle, draw=black] (e) at (2, 0) {$e$};
\node[shape=circle, draw=black] (p) at (0, 0) {$p$};

\node[shape=semicircle, draw=Cgreen, fill=Cgreen, rotate=90, scale=0.5] () at (1.825,-2) {};
\node[shape=semicircle, draw=Cblue, fill=Cblue, rotate=270, scale=0.5] () at (2.175,-2) {};
\node[shape=circle, draw=black] (t3) at (2,-2) {$t_3$};

\node[shape=circle, draw=black] (m) at (2,-4) {$m$};

\path [->] (d) edge node[right] {} (e);
\path [->] (d) edge node[right] {} (t3);
\path [->] (e) edge node[right] {} (p);
\path [->] (e) edge node[right] {} (t3);
\path [->] (p) edge node[right] {} (t3);
\path [->] (t3) edge node[right] {} (m);

\path [<->,dashed,draw=blue] (e) edge[bend left=50] node[] {} (d);
\path [<->,dashed,draw=blue] (d) edge[bend left=50] node[] {} (t3);
\path [<->,dashed,draw=blue] (m) edge[bend left=50] node[] {} (p);

\end{tikzpicture}
}
\caption{Graphs modeling how different factors affect sleep medication use (adapted from \citep{evandt07noise}, additional bidirected arcs included). Color-coding is applied to highlight different clusters of interest: Vertices belonging to the cluster $T_1$ and the cluster $T_2$ are marked in green and blue, respectively. Bidirected arcs are highlighted with blue for clarity.} \label{fig_sleepmed}
\end{figure}

In practice, we are typically interested in a certain family of causal effects. Accordingly, the clustering operations should not contain variables of the causal effects of interest. In this demonstration, we assume that we are interested in how demography, socioeconomic status and population density ($X_d$, $X_e$ and $X_p$, respectively) affect sleep medication use $X_m$. The family of causal effects we are interested in is then the set 
\begin{align*}
\Big\lbrace p(x_m \mid \Do(x_d)), \, p(x_m \mid \Do(x_e)), \, p(x_m \mid \Do(x_p)), \, p(x_m \mid \Do(x_d, x_e)), \\
p(x_m \mid \Do(x_d, x_p)), \, p(x_m \mid \Do(x_e, x_p)), \, p(x_m \mid \Do(x_d, x_e, x_p)) \bigg\rbrace.
\end{align*} 

Based on the structure of the graph $\cG$ in \Cref{fig_sleepmed_orig}, we propose the following candidates for clustered sets: $T_1 = \{ a, n_1, n_2, n_3 \}$ and $T_2 = \{ c, \ell, s \}$. While other clustering options are possible, these particular choices enable us to apply identifiability results from \cref{sec_identif}. We perform clustering as follows:
\begin{enumerate}[label=\rm{(\arabic*)}]
    \item The vertex set $T_1$ is c-isolated in $\cG$ since each parent of the cluster (both observed and unobserved) is also a parent of the vertex $t^* = n_2$, and there are no bidirected paths from $T_1$ to its descendants. Clustering of $T_1$ yields a graph $\cG'$, which is displayed in \Cref{fig_sleepmed_partially_clustered}.
    \item The vertex set $T_2$ is c-coherent with $t^* = s$ in the clustered graph $\cG'$ since $s$ is a parent of each child of the cluster. Clustering of $T_2$ in $\cG'$ yields a graph $\cG''$, which is present in \Cref{fig_sleepmed_clustered}. 
    \item We notice that we can further cluster the c-isolated vertex set $T_3 = \{ t_1, t_2 \}$ in $\cG''$, which yields the graph $\cG^{(3)}$ presented in \Cref{fig_sleepmed_clustered_final}.
\end{enumerate}
According to \cref{thm_identif_c_coherent,thm_identif_c_isolation,coro_iterated_universal}, the iterated clustering operation described above is universally identification invariant. Based on this information we make the following conclusions: 
\begin{itemize}
    \item Causal effects $p(x_m \mid \Do(x_B))$, where $B \cap \{ d, e \} \neq \emptyset$, are nonidentifiable in $\cG$, because the corresponding causal effects are nonidentifiable in $\cG^{(3)}$.
    \item The causal effect $p^{(3)}(x_m \mid \Do(x_p))$ is identifiable in $\cG^{(3)}$ with a functional 
    \begin{equation} \label{eq_identif_in_clustered}
    \begin{aligned}
    p^{(3)}(x_m \mid \Do(x_p)) & = \sum_{x_d, x_e, x_{t_3}} \Bigg[ p^{(3)}(x_{t_3} \mid x_d, x_e, x_p) p^{(3)}(x_e\cond x_d) p^{(3)}(x_d) \\
    & \times \left( \sum_{x_{\tilde p}} p^{(3)} (x_m \mid x_d, x_e, x_{\tilde p}, x_{t_3}) p^{(3)}(x_{\tilde p} \mid x_d, x_e)  \right) \Bigg].
    \end{aligned}
    \end{equation}
    The corresponding causal effect $p(x_m \mid \Do(x_p))$ is identifiable in $\cG$, and applying \cref{thm_identif_functional} iteratively one obtains an identifying functional by first replacing $x_{t_3}$ by the vector $(x_{t_1}, x_{t_2})$, and then replacing $x_{t_1}$ and $x_{t_2}$ by the vectors $x_{T_1} = ( x_a, x_{n_1}, x_{n_2}, x_{n_3 } )$ and $x_{T_2} = (x_c, x_\ell, x_s)$, respectively:
    \begin{align*}
    p(x_m \mid \Do(x_p)) & = \sum_{x_d, x_e, x_{T_1}, x_{T_2}} \Bigg[ p(x_{T_1}, x_{T_2} \mid x_d, x_e, x_p) p(x_e\cond x_d) p(x_d) \\
    & \times \left( \sum_{x_{\tilde p}} p (x_m \mid x_d, x_e, x_{\tilde p}, x_{T_1}, x_{T_2}) p(x_{\tilde p} \mid x_d, x_e)  \right) \Bigg].
    \end{align*}
\end{itemize}


\section{Discussion} \label{sec_discussion}

We studied the role of clustering in causal effect identification from a theoretical viewpoint. We defined coherent clustering operations, which prevent clustering from creating directed cycles or introducing spurious directed paths. Whenever the clustering operation does not create a directed cycle, any causal effect that is identifiable in the clustered graph is identifiable in the original graph as well, and the identifying functional can be obtained from the clustered graph by a simple substitution. This reduces the problem of identification invariance: it suffices to determine when nonidentifiability in the clustered graph implies nonidentifiability in the original graph, or equivalently, when clustering cannot create a hedge that is not already present. We established three conditions based on c-components under which no such hedge is created, namely plainness, c-coherence and c-isolation, each of which renders the clustering of a coherent cluster identification invariant for all causal effects not involving the clustered variables.

These results widen the range of settings in which identification invariance can be used in practice. Coherent clusters generalize transit clusters: they impose no requirement that all members of the cluster that have observed parents outside the cluster must have the same observed parents outside the cluster. An analogous result applies to the children of the cluster. In \Cref{sec_app}, we illustrated the practical applicability of the results.

Each of the three conditions, plainness, c-coherence and c-isolation, is sufficient but not necessary for universal identification invariance, and clustering operations exist that are identification invariant without satisfying any of them (e.g., \Cref{fig_almost_applicable}). The conditions presented here were chosen to be structural properties of the cluster that can be readily verified from the original graph. Characterizing structural conditions that are both necessary and sufficient
remains an open question. The conditions may also be relaxed if invariance is sought only for a smaller family of causal effects (e.g., just one specific causal effect), and in such a setting even the coherence assumption may not be needed, as long as new directed paths are not formed between the variables of the causal effects under study.

We also required that clustering preserves acyclicity. Allowing clustering to create directed cycles could be a potential next step in future research. In this setting, cycles in the clustered graph does not need to correspond to any cycle in the original graph, and identification has been studied through the do-calculus applied directly to the clustered graph \citep{ferreira2025identifying,yvernes2026relaxing}. Imposing conditions for identification invariance appears considerably harder in this case: the characterization of nonidentifiability itself has to be modified, as an ordinary hedge no longer detects the nonidentifiability that a cluster-level cycle induces \citep{ferreira2025identifying}.

Our focus has been on the theoretical properties of clustering operations. These properties provide a basis for future algorithmic solutions that could, for instance, detect a subset of coherent clusters or verify plainness, c-coherence and c-isolation of a cluster. 


\acks{Jani Nykänen and Juha Karvanen were supported by the Research Council of Finland (grant no. 368935 to JK). Otto Tabell was supported by the Finnish Doctoral Program Network in Artificial Intelligence (AI-DOC, Decision VN/3137/2024-OKM-6).} 

\nosectionappendix


\begin{toappendix}

\section{Transit Cluster} \label{apx_transit}

A transit cluster \citep{thk2023} is a special type of a coherent cluster that is uniquely determined by its \emph{receivers} and \emph{emitters}, which we define below.

\begin{defin}[Receiver] \label{def_receiver}
For a set of vertices $T \subset V$ in a causal graph $\cG = (V\cup U, E)$, the set of receivers is the set
\begin{equation*}
    \Rcv_{\cG}(T) = \{r \in T \mid \Pa{\cG}{r} \setminus T^+ \neq \emptyset \},
\end{equation*}
that is, the set of all vertices of $T$ that have parents outside the vertex set $T$ (excluding those unobserved parents whose both children are in $T$).
\end{defin}

We note that receivers also include those vertices that have an unobserved parent whose other child is outside the cluster, that is, vertices that are connected via a bidirected arc to a vertex outside the cluster. 

\begin{defin}[Emitter]
For a set of vertices $T\subset V$ in a causal graph $\cG=(V\cup U, E)$, the set of emitters is the set
\begin{equation*}
    \Emt_{\cG}(T)=\{e \in T \mid \Ch{\cG}{e} \setminus T \neq \emptyset \},
\end{equation*}
that is, the set of all vertices of $T$ that have children outside the vertex set $T$.
\end{defin}

A transit cluster is defined as follows:
\begin{defin}[Transit cluster] \label{def_original_cluster}
A set of vertices $T \subset V$ in a causal graph $\cG = (V \cup U, E)$ is a transit cluster if the following conditions hold:
\begin{enumerate}[label=\rm{(T\arabic*)}]

	\item For all pairs of receivers $r_1, r_2 \in \Rcv_{\cG}(T)$ it holds $\Prnt_{\cG}(r_1) \setminus T^+ = \Prnt_{\cG}(r_2) \setminus T^+$ (i.e., all receivers have the same parents). \label{cond_O_parents} \label{cond_O_first}

	\item For all pairs of emitters $e_1, e_2 \in \Emt_{\cG}(T)$ it holds $\Chl_{\cG}(e_1) \setminus T = \Chl_{\cG}(e_2) \setminus T$ (i.e., all emitters have the same children). \label{cond_O_children} 
	
	\item For all vertices $t_i \in T$ there exists a receiver $r \in \Rcv_{\cG}(T)$ or an emitter $e \in \Emt_{\cG}(T)$ such that $t_i$ and $r$ or $t_i$ and $e$ are connected via an undirected path in the subgraph $\cG[T^=]$, which is obtained from $\cG[T]$ by removing all incoming edges to $\Rcv_{\cG}(T)$ and all outgoing edges from $\Emt_{\cG}(T)$. \label{cond_O_connected}

	\item If $\Emt_{\cG}(T) \neq \emptyset$, then for all $r \in \Rcv_{\cG}(T)$ there exists an $e \in \Emt_{\cG}(T)$ such that $e \in \Dsc_{\cG}(r)$. \label{cond_O_descendants}
	\vspace{-1.0em}
	\item If $\Rcv_{\cG}(T) \neq \emptyset$, then for all $e \in \Emt_{\cG}(T)$ there exists an $r \in \Rcv_{\cG}(T)$ such that $r \in \Anc_{\cG}(e)$. \label{cond_O_ancestors} \label{cond_O_last}
    \vspace{-1.0em}
\end{enumerate}

\end{defin}

\smallskip

\cref{def_original_cluster} deviates slightly from the definition of a transit cluster used by \citet{thk2023} and \citet{ttk2025} since we only allow clustering of observed vertices. However, if $T$ is a transit cluster according to \cref{def_original_cluster}, then the augmented cluster $T^+$ (i.e., the set $T$ with all unobserved vertices whose both children are in $T$) is considered a transit cluster in the aforementioned articles.


\section{Auxiliary Results} \label{sec_appendix_aux}

In coherent clustering "similar" hedges in the original and the clustered graph also share a similar root set, a property which is continuously exploited in the proofs of \cref{sec_identif}. A more precise statement is presented below.

\begin{lemrep} \label{lemma_subgraph_coherence}
Let $A, B \subset V$ be disjoint sets in a causal graph $\cG = (V \cup U, E)$. Suppose a coherent cluster $T \subset V \setminus (A \cup B)$, and let $\cG' = (V' \cup U', E')$ be the graph induced by the clustering of $T$ with a vertex $t \in V'$ representing $T$. Then $\Anc_{\cG[\overline{B}]}(A) \setminus T = \Anc_{\cG'[\overline{B}]}(A) \setminus \left\{ t \right\}$.
\end{lemrep}

\begin{proof}
"$\subseteq$": Let $w_i \in \Anc_{\cG[\overline{B}]}(A) \setminus T$. Then there is a directed path $w_i \rightarrow w_{i + 1} \rightarrow \cdots \rightarrow w_{i + m} \rightarrow a$ in $\cG[\overline{B}]$ such that $w_{i + 1}, \ldots, w_{i + m} \in V$ and $a \in A$. By \cref{lemma_coherence_properties} \ref{prop_coherence_one_maximal_subpath} the vertices in the path that belong to $T$ form a subpath. Since $T \cap B = \emptyset$, we can replace this subpath by the vertex $t$ to obtain a path connecting $v_i$ to $A$ in $\cG'[\overline{B}]$. Thus, $w_i \in \Anc_{\cG'[\overline{B}]}(A)$.

\smallskip

"$\supseteq$:" Let $w_i \in \Anc_{\cG'[\overline{B}]}(A) \setminus \{t\}$. Again there is a directed path $w_i \rightarrow w_{i + 1} \rightarrow \cdots \rightarrow w_{i + m} \rightarrow a$ in $\cG'[\overline{B}]$, where $w_{i + 1}, \ldots, w_{i + m} \in V'$ and $a \in A$. This path can have at most one occurrence of the vertex $t$, because otherwise there would be a directed cycle in $\cG'$. We use \cref{def_coherence} to replace the vertex $t$ in the path by a subpath $t_1 \rightarrow \cdots \rightarrow t_m$ such that $t_1, \ldots, t_m \in T$ in $\cG$. Since $T \cap B = \emptyset$, the substitutive path also exists in $\cG[\overline{B}]$, hence $w_i \in \Anc_{\cG[\overline{B}]}(A)$.
\end{proof}


\section{Explanation of Variables in the Demonstration Example} \label{sec_appendix_table}

\Cref{table_sleepmed} provides explanation of variables appearing in the graphs of \Cref{fig_sleepmed} in \cref{sec_app}.

\begin{table}[!h]
\centering
\begin{tabular}{cl}
\toprule
Vertex & Variable \\
\midrule
$d$ & Demography \\
$e$ & Socioeconomic status \\
$s$ & Noise sensitivity \\
$c$ & Chronic diseases \\
$\ell$ & Lifestyle \\
$p$ & Population density \\
$n_1$ & Neighborhood noise \\
$n_2$ & Rail traffic noise \\
$n_3$ & Road traffic noise \\
$a$ & Noise annoyance \\
$m$ & Sleep medication use \\
\bottomrule
\end{tabular}
\caption{Explanation of variables in \Cref{fig_sleepmed}.} \label{table_sleepmed}
\end{table}

\end{toappendix}

\begin{toappendix}
\clearpage
\bibliography{references}
\label{applastpage} 
\end{toappendix}

\end{document}